\newenvironment{myitemize}{\begin{list}{$\bullet$}{\setlength{\leftmargin}{0pt}
\setlength{\itemindent}{\labelwidth}}}
{\end{list}}
\newcommand{\raf}[1]{(\ref{#1})}
\newcommand{\OR}{\bigvee}
\newcommand{\AN}{\bigwedge}
\newcommand{\cM}{{\mathcal M}}
\newcommand{\cN}{{\mathcal N}}
\newcommand{\charset}{{\mathit char}}
\newcommand{\modd}{{\mathit mod}}
\newcommand{\ccap}{Cl_{\wedge}}
\newcommand{\At}{{\mathit At}}
\newcommand{\Lit}{{\mathit Lit}}
\newcommand{\ON}{{\mathit ON}}
\newcommand{\OFF}{{\mathit OFF}}
\newcommand{\ol}[1]{{\overline{#1}}}
\newcommand{\ga}{\alpha}
\newcommand{\gb}{\beta}
\newcommand{\gvp}{\varphi}
\newcommand{\df}{\stackrel{\mathrm{def}}{=}}
\newcommand{\nop}[1]{}
\newtheorem{thm}{Theorem}
\newtheorem{prop}[thm]{Proposition}
\newtheorem{exmp}[thm]{Example}
\newtheorem{lem}[thm]{Lemma}
\newtheorem{cor}[thm]{Corollary}
\title{Deductive Inference for the Interiors and Exteriors of Horn Theories\thanks{An extended abstract of this article was presented in
  \emph{Proceedings of Algorithms and Computation, 19th International
 Symposium} (ISAAC~2008), Lecture Notes in Computer Science, Vol.~5369, pp.~390--401,
  Springer-Verlag Berlin Heidelberg, 2008.}}
\author{
\begin{tabular}{cc}  
\begin{minipage}{6cm}
\begin{center}
Kazuhisa Makino\\
Department of Mathematical Informatics, \\
University of Tokyo, \\
Tokyo,  113-8656, Japan.\\
{\tt makino@mist.i.u-tokyo.ac.jp}
\end{center}
\end{minipage}
\begin{minipage}{6cm}
\begin{center}
Hirotaka Ono\\
Department of Computer Science and Communication
Engineering, \\ Kyushu University, \\ 
Fukuoka 812-8581, Japan.\\
{\tt ono@csce.kyushu-u.ac.jp}
\end{center}
\end{minipage}
\end{tabular}
}
\date{}
\begin{document}
\maketitle
\begin{abstract}
In this paper, we investigate the deductive inference 
for the interiors and exteriors of Horn knowledge bases, 
where the interiors and exteriors were introduced by Makino and Ibaraki
\cite{interior} to study stability properties of knowledge bases. 
We present a linear time algorithm for 
 the deduction for the interiors and show that it is co-NP-complete for the deduction for  the exteriors. 
Under model-based representation, we show that
 the deduction problem for interiors is NP-complete while the one for
 exteriors is co-NP-complete. 
 As for Horn envelopes of the exteriors,  
we show that it is linearly solvable under model-based representation, 
while it is co-NP-complete under formula-based representation. 
We also discuss the polynomially solvable cases 
for all the intractable problems. 
\end{abstract}

\section{Introduction}
Knowledge-based systems are commonly used to store the sentences 
as our knowledge for the
purpose of having automated reasoning such as deduction 
applied to them
(see e.g., \cite{brachman-levesque-04}). 
Deductive inference is a fundamental mode of reasoning, and 
usually abstracted as follows: 
Given the knowledge base $KB$, assumed to capture our knowledge about
the domain in question, and a query $\chi$ that is assumed to capture
the situation at hand, decide whether $KB$ implies $\chi$, denoted by
$KB \models \chi$, which can be understood  as the question: ``Is $\chi$
 consistent with the current state of knowledge ?''

In this paper, we consider the interiors and exteriors of knowledge base.
Formally, for a given positive integer
$\ga$, the $\ga$-interior of $KB$, denoted by $\sigma_{-\ga}(KB)$, 
is a knowledge that consists of the models (or assignments) $v$ 
satisfying that the $\ga$-neighbors of $v$ are
all models of $KB$, 
and  the $\ga$-exterior of $KB$, denoted by $\sigma_{\ga}(KB)$, 
is a knowledge that consists of the models $v$ satisfying that 
at least one of the $\ga$-neighbors of $v$ is a model of $KB$  \cite{interior}.
Intuitively, the interior consists of the models $v$ that 
{\em strongly} satisfy $KB$,
since all neighbors of $v$ are models of $KB$, 
while 
the exterior consists of the models $v$ that 
 {\em weakly} satisfy $KB$,  
since at least one of the $\ga$-neighbors of $v$ is a model of $KB$.
Here we note that $v$ might not satisfy $KB$, even if we say that it
weakly satisfies $KB$.
As mentioned in \cite{interior}, the interiors and exteriors of
knowledge base
 merit study in their own right, since they shed light on the structure
 of knowledge base. 
Moreover, let us consider the situation in which knowledge base $KB$ is {\em
not perfect} in the sense that some sentences in $KB$ are wrong and/or
some are missing in $KB$ (see also \cite{interior}). 

Suppose that we use $KB$ as a knowledge base for automated reasoning, say, 
deductive inference $KB \models \chi$. 
Since $KB$ does not represent {\em real} knowledge $KB^*$, 
 the reasoning result is no longer true.
However, if we use the interior  $\sigma_{-\ga}(KB)$ of $KB$ as a knowledge base
and have $\sigma_{-\ga}(KB) \not\models \chi$, then
we  can expect that the result is true for  real knowledge $KB^*$,
since $\sigma_{-\ga}(KB)$ consists of models which strongly satisfy $KB$.
On the other hand, if we use the exterior $\sigma_{\ga}(KB)$ of KB as a knowledge base
and have $\sigma_{\ga}(KB) \models \chi$, then
we  can expect that the result is true for real knowledge $KB^*$,
since $\sigma_\ga(KB)$ consists of models which weakly satisfy $KB$.
In this sense, the interiors and exteriors help to have {\em safe} reasoning.

\medskip

\noindent{\bf Main problems considered}. In this paper, we study 
the deductive inference for the interiors and exteriors of propositional
Horn theories, where Horn theories are ubiquitous in Computer Science, 
cf.\ \cite{mako-87}, and
are of particular relevance in Artificial Intelligence and Databases. 
\nop{
since they consist of clauses equivalent to rules of form $b_1\land
b_2\land\cdots\land b_m \rightarrow b_0$ and to constraints of form
$\neg( b_1\land b_2\land \cdots\land b_n)$, where all $b_i$'s are
atoms. }
It is known that important reasoning problems like deductive
inference and satisfiability checking, which are
intractable for arbitrary propositional theories, are solvable in
linear time for Horn theories (cf.\ \cite{linear}).

More precisely, we address the following problems:

\begin{myitemize}
\item Given a Horn theory $\Sigma$, a clause $c$, and nonnegative
      integer $\ga$, we consider the problems of deciding if deductive
      queries hold for the $\ga$-interior and exterior of $\Sigma$,
      i.e., $\sigma_{-\ga}(\Sigma) \models c$ and  $\sigma_{\ga}(\Sigma)
      \models c$.  It is well-known \cite{linear} that a deductive query for a Horn
      theory can be answered in linear time. Note that it is
      intractable to construct
      the interior and exterior for a Horn theory
      \cite{interior,posinterior}, and hence a direct method (i.e., 
      first construct the interior (or exterior) and then check a
      deductive query) is not possible efficiently.

\item We contrast traditional formula-based (syntactic) with
model-based (semantic) representation of Horn theories. 
The latter form of representation has been proposed as an alternative form of
representing and accessing a logical knowledge base, cf.\
\cite{dech-pear-92,EIM99,EM,kaut-etal-93,kaut-etal-95,kavv-etal-93,khar-roth-96,khar-roth-97}.
In model-based reasoning, $\Sigma$ is represented by a subset of its
models $\cM$, which are commonly called {\em characteristic
models}. 
As shown by Kautz {\em et al.}  \cite{kaut-etal-93}, 
the deductive inference can be done in polynomial time, 
given its characteristic models.  
\nop{
As shown by Kautz {\em et al.}  \cite{kaut-etal-93}, the deductive and
      abductive inferences can be done in polynomial time, if a Horn
      theory is given by its characteristic models, 
where we note that the abduction is intractable under
formula-based representation \cite{selm-leve-90,selm-leve-96}. Similar
results were shown for other theories by Khardon and Roth
\cite{khar-roth-96}.
}

\item Finally, we consider Horn approximations for the exteriors of
     Horn theories. Note that the interiors of Horn theories are Horn,
      while the exteriors might not be Horn. 
      We deal with the least upper bounds, called the {\em Horn
      envelopes} \cite{approximation},  for  the exteriors of Horn theories. 
\end{myitemize}

\noindent{\bf Main results}. 
We investigate the problems mentioned above from an algorithmical viewpoint.  
For all the problems, we provide either polynomial time algorithms or proofs of
the intractability;  thus, our work gives a complete
picture of the tractability/intractability frontier  of deduction for
interiors and exteriors of Horn theories. 
Our main results can be summarized as follows (see Figure \ref{fig-0}).

\begin{myitemize}
\item  We present a linear time algorithm for the deduction for the
       interiors of a given Horn theory, 
and show that  it is co-NP-complete for the deduction for  the exteriors. 
Thus, the positive result for ordinary deduction for Horn theories
       extends to the interiors, but does not to the exteriors.  
We also show that the deduction for  the exteriors is possible in
       polynomial time, if $\ga$ is bounded by a constant or if
       $|N(c)|$ is bounded by a logarithm of the input
       size, where $N(c)$ corresponds to the set of negative literals in
       $c$. 
 
\item  Under model-based representation, we show that
the consistency problem and the deduction for the interiors of Horn theories are both co-NP-complete.  
As for the exteriors, we show that the deduction is co-NP-complete. 
We also show that the deduction for the interiors is possible in polynomial time
       if $\ga$ is bounded by a constant, and so is for the exteriors,
       if $\ga$ or $|P(c)|$ is bounded by a constant, or if $|N(c)|$ is bounded
       by  a logarithm of the input size,  
where $P(c)$ corresponds to the set of positive literals in
       $c$. 

\item  As for Horn envelopes of the exteriors of Horn theories,  
we show that it is linearly solvable under model-based representation, 
while it is co-NP-complete under formula-based representation. 
The former contrasts to the negative result for the exteriors. 
We also present a polynomial algorithm for formula-based representation,
       if  $\ga$ is bounded by a constant or if $|N(c)|$ is bounded by a logarithm of the input
       size. 
\end{myitemize}

\newcommand{\lwb}[1]{\smash{\hbox{#1}}}
\begin{figure}
\label{fig-0}
\vspace*{-.3cm}
\begin{center}
\begin{small}
 \begin{tabular}{l|ccc}\hline\\[-.24cm]
 & Interiors & Exteriors & \ \ Envelopes of Exteriors\ \  \\[.1cm] \hline \\[-.24cm]
  Formula-Based \ & \lwb{P} & \lwb{co-NP-complete$^\star$}  & \ \
  \lwb{co-NP-complete$^\star$} \
  \  \\[.1cm]  
\hline \\[-.24cm]
  Model-Based  \ &  \ \   \lwb{NP-complete$^\dag$} \ \ &\ \  \lwb{co-NP-complete$^\ddag$} \
  \ & \lwb{P} \\[.1cm]  
\hline
 \end{tabular}
\end{small}
\end{center}
\smallskip
$^\star$: It  is polynomially solvable, if $\ga=O(1)$ or 
 $|N(c)|=O(\log \parallel\!\Sigma\!\parallel)$.\\ \vspace*{-.3cm}

$^\dag$: It  is polynomially solvable, if $\ga=O(1)$.\\ \vspace*{-.3cm}

$^\ddag$: It is polynomially solvable, if $\ga=O(1)$, $|P(c)|=O(1)$, or
 $|N(c)|=O(\log  n|\charset(\Sigma)|)$. 
\caption{Complexity of the deduction for interiors and exteriors of
 Horn theories}
\end{figure}

The rest of the paper is organized as follows.
In the next section, we review the basic concepts and fix
notations.
Sections \ref{sec:formulabase} and \ref{sec-char}
investigate the deductive inference for the interiors and
exteriors of Horn theories. 
Section \ref{sec-env} considers the deductive inference for the
envelopes of the exteriors of  Horn theories.  

\section{Preliminaries}
\label{sec-def}
\subsection*{Horn Theories}
\label{sec-def-horn}
We assume a standard propositional language with atoms $\At =
\{ x_1, x_2,\ldots,x_n\}$, 
where each $x_i$ takes either value $1$ (true) or $0$ (false). 
A {\em literal}\/ is either an atom $x_i$ or its negation, which
 we denote by $\ol{x}_i$. 
The opposite of a literal $\ell$ is denoted by $\ol{\ell}$, 
and the opposite of a set of literals $L$ by $\ol{L}=\{ \ol{\ell} \mid \ell
\in L \}$.  
Furthermore, $\Lit = \At \cup \ol{\At}$ denotes the set of all literals.

A {\em clause} is a disjunction $c = \bigvee_{i \in P(c)} x_i \lor
\bigvee_{i \in N(c)}\ol{x}_i$ of literals, 
where $P(c)$ and $N(c)$ are the sets of indices whose 
corresponding variables  occur positively and
negatively in $c$ and $P(c) \cap N(c)=\emptyset$.  
Dually, a {\em term} is conjunction 
$t = \bigwedge_{i\in P(t)} x_i \land \bigwedge_{i \in N(t)}\ol{x}_i$ of
literals, where $P(t)$ and $N(t)$ are similarly defined. 
We also view clauses and terms as sets of literals.
A {\em conjunctive normal form} ({\em CNF}) is a conjunction of
clauses. 
A clause $c$ is {\em Horn}, if $|P(c)|\leq 1$. 
A {\em theory} $\Sigma$ is any set
of formulas; it is {\em Horn}, if it is a set of Horn clauses. As
usual, we identify $\Sigma$ with $\gvp= \bigwedge_{c\in \Sigma} c$,
and write $c \in \gvp$ etc.
It is known \cite{linear} that the deductive query for a Horn theory, i.e.,
deciding if $\Sigma \models c$ for a clause $c$ is possible in linear
time.

We recall that Horn theories have a well-known semantic
characterization. A {\em model} is a vector $v\!\in\!\{0,1\}^n$, whose
$i$-th component is denoted by $v_i$. 
For a model $v$, let $\ON(v)=\{ i \mid v_i=1\}$ and $\OFF(v)=\{ i \mid
v_i=0\}$. 
The value of a formula $\gvp$ on a model
$v$, denoted $\gvp(v)$, is inductively defined as usual; satisfaction
of $\gvp$ in $v$, i.e., $\gvp(v)=1$, will be denoted by $v\models
\gvp$. 
The set of models of a formula $\gvp$ (resp.,
theory $\Sigma$), denoted by $\modd(\gvp)$ (resp., $\modd(\Sigma)$),
and logical consequence $\gvp \models \psi$ (resp., $\Sigma\models
\psi$) are defined as usual.
For two models $v$ and $w$, we denote by $v \leq w$ the usual componentwise
ordering, i.e., $v_i \leq w_i$ for all $i = 1,2,\ldots,n$, where
$0\leq 1$; $v < w$ means $v\neq w$ and $v \leq w$. 
Denote by $v\,\AN\, w$
componentwise AND of models $v,w \in \{0,1\}^n$, and by $\ccap(\cM)$
the closure of $\cM \subseteq \{0,1\}^n$ under $\,\AN\,$.  Then, a theory
$\Sigma$ is Horn representable if and only if  $\modd(\Sigma) =
\ccap(\modd(\Sigma))$ (see \cite{dech-pear-92,khar-roth-96}) for
proofs).

\begin{exmp}
{\rm Consider $\cM_1 \!=\!\{(0101), (1001), (1000)\}$ and $\cM_2\!=\!\{
(0101)$, $(1001),$ $(1000),$ $(0001),$ $(0000)\}$. Then, for $v =
(0101)$, $w =(1000)$, we have $w,v \in \cM_1$, while $v\AN w= (0000)
\notin \cM_1$; hence $\cM_1$ is not the set of models of
a Horn theory. On the other hand, $\ccap(\cM_2) = \cM_2$, thus $\cM_2 =
\modd(\Sigma_2)$ for some Horn theory $\Sigma_2$.
}
\end{exmp}

As discussed by Kautz {\em et~al.} \cite{kaut-etal-93}, 
a Horn theory $\Sigma$ is
semantically represented by its characteristic models, where $v\in
\modd(\Sigma)$ is called {\em characteristic} (or {\em extreme}
\cite{dech-pear-92}), if $v \not\in \ccap(\modd(\Sigma) \setminus \{ v
\})$. 
The set of all such models, the {\em characteristic set of
$\Sigma$}, is denoted by $\charset(\Sigma)$.  
Note that
$\charset(\Sigma)$ is unique. E.g., $(0101) \in \charset(\Sigma_2)$,
while $(0000)\notin \charset(\Sigma_2)$; we have $\charset(\Sigma_2) =
\cM_1$.
It is known \cite{kaut-etal-93} that the deductive query for a Horn theory
$\Sigma$ from the characteristic set $\charset(\Sigma)$ can be done in
linear time, i.e., $O(n|\charset(\Sigma)|)$ time.


\subsection*{Interior and Exterior of Theories}
For a model $v \in \{0,1\}^n$ and a nonnegative integer $\ga$, 
its {\em $\ga$-neighborhood}  is defined by \[
\cN_{\ga}(v) =\{ w \in \{0,1\}^n \mid \parallel w-v \parallel \leq \ga \}, 
\] 
where $\parallel v \parallel$ denotes $\sum_{i=1}^{n} |v_{i}|$. 
Note that $|\cN_{\ga}(v)|=\sum_{i=0}^{\ga} {n \choose i}=O(n^{\alpha+1})$. 
For a theory $\Sigma$  and a nonnegative integer $\ga$, 
the $\ga$-interior and $\ga$-exterior of $\Sigma$,  denoted by 
$\sigma_{-\ga}(\Sigma)$ and $\sigma_{\ga}(\Sigma)$ respectively, are theories defined by 
\begin{eqnarray}
\modd(\sigma_{-\ga}(\Sigma))&=& \{v \in \{0,1\}^n \mid  \cN_{\ga}(v)
 \subseteq  \modd(\Sigma) \}\\
\modd(\sigma_{\ga}(\Sigma))&=& \{v \in \{0,1\}^n \mid  \cN_{\ga}(v) \cap
\modd(\Sigma) \not=\emptyset\}.
 \end{eqnarray}
By definition,  $\sigma_{0}(\Sigma)=\sigma$, $\sigma_{\ga}(\Sigma) \models \sigma_{\gb}(\Sigma)$ for integers
$\ga$ and $\gb$ with $\ga < \gb$, and  
$\sigma_{\ga}(\Sigma_1) \models \sigma_{\ga}(\Sigma_2)$ holds for any integer
$\ga$,  if 
two theories $\Sigma_1$ and $\Sigma_2$ satisfy $\Sigma_1 \models \Sigma_2$.

\begin{exmp}\rm 
\label{ex-2}
Let us consider a Horn theory $\Sigma =\{\ol{x}_1 \vee x_3, \ol{x}_2 \vee x_3,
  \ol{x}_2\vee x_4\}$ of $4$ variables, where $\modd(\Sigma)$ is given by 
\[\modd(\Sigma)=\{(1111), (1011), (1010), (0111), (0011), (0010), (0001), (0000)
\} \] 
(See Figure \ref{fig-ex-aa}). 
Then we have  
$\sigma_{\ga}(\Sigma)=\{ \emptyset \}$ for $\ga\leq -2$, 
$\{\ol{x}_1,\ol{x}_2, x_3,  x_4 \}$ for $\ga=-1$, 
$\Sigma$ for $\ga=0$, $\{ \ol{x}_1 \vee
 \ol{x}_2 \vee x_3 \vee  x_4 \}$ for $\ga=1$, and 
$\emptyset$ for $\ga\geq 2$.
For example, $(0011)$ is the unique model of $\modd(\sigma_{-1}(\Sigma))$, 
since $\cN_{1}(0011) \subseteq \modd(\Sigma)$ and 
$\cN_{1}(v) \not\subseteq \modd(\Sigma)$ holds for all the other models $v$. 
For the $1$-exterior, we can see that 
all models $v$ with $(\ol{x}_1 \vee
 \ol{x}_2 \vee x_3 \vee  x_4)(v)=1$ satisfy $\cN_{1}(v) \cap
\modd(\Sigma) \neq \emptyset$, and no other such model exists. 
For example, $(0101)$ is a model of 
$\sigma_{1}(\Sigma)$, since $(0111) \in \cN_{1}(0101)\cap
\modd(\Sigma)$. On the other hand,  $(1100)$ is not a model of 
$\sigma_{1}(\Sigma)$, since $\cN_{1}(1100)\cap
\modd(\Sigma)=\emptyset$. 
 Notice that $\sigma_{-1}(\Sigma)$ is Horn, 
while  $\sigma_{1}(\Sigma)$ is not.
\end{exmp}
 \begin{figure}
\begin{center}
  \includegraphics[scale=.8]{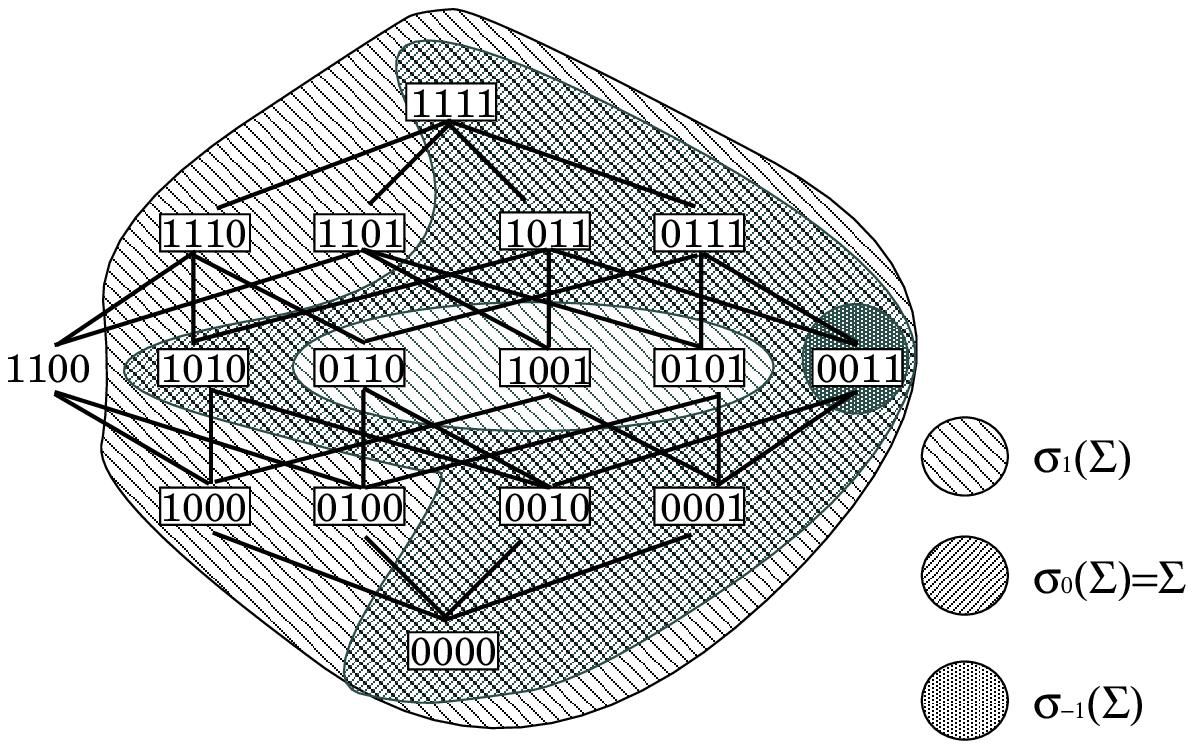}
\end{center}
\caption{A Horn theory and its interiors and exteriors}
\label{fig-ex-aa}
\end{figure}

Makino and Ibaraki \cite{interior} introduced the interiors and exteriors
to analyze  stability of Boolean functions, and studied their
basic properties and complexity issues on them (see also \cite{posinterior}).
 For example, it is known \cite{interior} 
that, for a theory $\Sigma$ and nonnegative integers $\ga$  and $\gb$, 
$\sigma_{-\ga}(\sigma_{-\gb}(\Sigma))= \sigma_{-\ga-\gb}(\Sigma)$, 
$\sigma_{\ga}(\sigma_{\gb}(\Sigma)) = \sigma_{\ga+\gb}(\Sigma)$, and 
\begin{eqnarray}
\sigma_{\ga}(\sigma_{-\gb}(\Sigma)) &\models& \sigma_{\ga-\gb}(\Sigma)
 \ \models  \sigma_{-\gb}(\sigma_{\ga}(\Sigma)). \label{eq--0}
\end{eqnarray}
For a nonnegative integer $\ga$ and two theories $\Sigma_1$ and
$\Sigma_2$, we have 
\begin{eqnarray}
  \sigma_{-\ga}(\Sigma_1 \cup \Sigma_2) &=& \sigma_{-\ga}(\Sigma_1) \cup
   \sigma_{-\ga}(\Sigma_2)\label{eq-1}\\
  \sigma_{\ga}(\Sigma_1 \cup \Sigma_2) &\models& \sigma_{\ga}(\Sigma_1) \cup
   \sigma_{\ga}(\Sigma_2), \label{eq-2}
\end{eqnarray}
where  $\sigma_{\ga}(\Sigma_1 \cup \Sigma_2) \not= \sigma_{\ga}(\Sigma_1) \cup
   \sigma_{\ga}(\Sigma_2)$ holds in general.

As demonstrated in Example \ref{ex-2}, it is not difficult to see that 
the interiors of any Horn theory are Horn, which is, for example, proved by \raf{eq-1} and Lemma  \ref{prop3}, while the exteriors might be not Horn.


\nop{

It should be noticed that exterior functions of Horn functions are also
Horn by Property \ref{prop3}. However, this is not always true for the interior
functions.

\begin{property}\label{prop4}\cite{interior}
 Let $f$ be a function and $\theta$ be an integer. Then 
 $\sigma_{\theta}(\bar{f})=\overline{\sigma_{-\theta}(f)}$ holds. 
\end{property}
}

\section{Deductive  Inference from Horn Theories}
\label{sec:formulabase}
In this section, we investigate the deductive inference for the interiors and
exteriors of a given Horn theory. 

\subsection{Interiors}
Let us first consider the deduction for the $\ga$-interiors of a  Horn
theory: 
 Given a Horn theory $\Sigma$,
a clause $c$, and a positive integer $\ga$, decide if $\sigma_{-\ga}(\Sigma)
\models c$ holds. 
We show that the problem is solvable in linear time after showing a
series of lemmas. 

The following lemma is a basic property of the interiors.

\begin{lem}\label{prop3}
Let $c$ be a clause. Then for a nonnegative integer $\ga$, 
we have $\sigma_{-\ga}(c)=\OR_{{S \subseteq
  c:}\atop{|S|=\ga+1}}\bigl(\AN_{\ell \in S}\ell\bigr) =\AN_{{S \subseteq
  c:}\atop{|S|=|c|-\ga}}\bigl(\OR_{\ell \in S}\ell\bigr)$.  
\end{lem}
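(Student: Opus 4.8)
The plan is to reduce the whole statement to a single combinatorial fact about Hamming distance and then check each of the two normal forms against it. Throughout, view $c$ as a set of literals on the distinct variables indexed by $P(c)\cup N(c)$ (distinct because $P(c)\cap N(c)=\emptyset$), and for a model $v$ let $k(v)$ denote the number of literals of $c$ that are true under $v$.

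The first step is to establish the semantic characterization
\[
v\models \sigma_{-\ga}(c)\quad\Longleftrightarrow\quad k(v)\ge \ga+1 .
\]
The key geometric observation is that a model $w$ is a \emph{non-model} of $c$ precisely when it sets every literal of $c$ to $0$; this pins $w$ down on the variables in $P(c)\cup N(c)$ and leaves all other coordinates free. Hence, for a fixed $v$, the cheapest way to reach a non-model of $c$ is to flip exactly those coordinates of $v$ on which a literal of $c$ is currently satisfied and leave every other coordinate untouched. Since the satisfied literals sit on $k(v)$ distinct variables, each of which must be flipped and no others need be, the Hamming distance from $v$ to the nearest non-model of $c$ equals $k(v)$. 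By the definitions of $\cN_\ga$ and $\sigma_{-\ga}$, we then have $v\models\sigma_{-\ga}(c)$ iff $\cN_\ga(v)\subseteq\modd(c)$ iff no non-model of $c$ lies within distance $\ga$ of $v$ iff $k(v)\ge\ga+1$, as claimed.

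Next I would match each normal form to this condition. For the disjunctive form, the term $\AN_{\ell\in S}\ell$ with $|S|=\ga+1$ is true at $v$ exactly when all $\ga+1$ literals of $S$ are satisfied, so the disjunction over all such $S$ is true iff some $(\ga+1)$-subset of $c$ is entirely satisfied, i.e.\ iff $k(v)\ge\ga+1$; this agrees with the characterization. For the conjunctive form I would argue by complementation: the clause $\OR_{\ell\in S}\ell$ with $|S|=|c|-\ga$ \emph{fails} at $v$ iff all its literals are false, and such an $S$ exists iff $v$ falsifies at least $|c|-\ga$ literals, i.e.\ $|c|-k(v)\ge|c|-\ga$, i.e.\ $k(v)\le\ga$. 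Therefore the whole conjunction holds iff no such $S$ exists, i.e.\ iff $k(v)\ge\ga+1$, again matching. Combining the three equivalences proves that all three expressions have the same set of models.

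The only delicate point, and the step I expect to need the most care, is the distance computation: one must argue rigorously that flipping precisely the satisfied literals yields a genuinely nearest non-model, which rests on the two facts that coordinates outside $P(c)\cup N(c)$ are irrelevant to the truth of $c$ and that distinct satisfied literals occupy distinct variables. Apart from that, I would record the boundary conventions (empty disjunction $=0$, empty conjunction $=1$) and restrict attention to $0\le\ga\le|c|$, the range in which the index sets $\{S\subseteq c:|S|=\ga+1\}$ and $\{S\subseteq c:|S|=|c|-\ga\}$ behave complementarily; outside this range both sides degenerate to the unsatisfiable theory and the identity holds trivially.
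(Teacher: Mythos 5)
Your proof is correct, but note that the paper itself offers no proof of this lemma at all: it is stated as ``a basic property of the interiors,'' implicitly imported from the Makino--Ibaraki reference \cite{interior}, and is followed only by a worked example. So your argument is genuinely self-contained where the paper is not. Its core is the clean semantic characterization that the Hamming distance from $v$ to the nearest non-model of $c$ equals $k(v)$, the number of literals of $c$ satisfied by $v$ (valid precisely because $P(c)\cap N(c)=\emptyset$ pins the non-models of $c$ down to a single pattern on the variables of $c$), whence $v\models\sigma_{-\ga}(c)$ iff $k(v)\ge\ga+1$; both normal forms are then verified against this threshold condition by elementary counting. This is exactly the kind of argument one would expect the original reference to contain, and it buys a reusable fact: the same characterization immediately explains why the interior operator turns a clause into a threshold-like condition, which is what Lemmas \ref{lemma-2}, \ref{lemma-1} and \ref{lemma-3} exploit downstream.

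One small inaccuracy in your closing remark: for $\ga>|c|$ the set $\{S\subseteq c:|S|=|c|-\ga\}$ is empty, so under your own stated convention (empty conjunction $=1$) the conjunctive form degenerates to a \emph{tautology}, not to the unsatisfiable theory, while $\sigma_{-\ga}(c)$ and the disjunctive form are both unsatisfiable. So the identity as literally written requires $0\le\ga\le|c|$ (or an ad hoc convention for negative-size index sets); it does not ``hold trivially'' outside that range. This is a degenerate case the paper also ignores, and it does not affect the correctness of your main argument, but the remark should be fixed rather than left as is.
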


For example, let us consider $c=x_1 \vee x_2 \vee \ol{x}_3 \vee
\ol{x}_4$, $\ga=2$. Then we have $\sigma_{-\ga}(c)=x_1 x_2 \ol{x}_3 \vee
x_1 x_2 \ol{x}_4 \vee x_1 \ol{x}_3\ \ol{x}_4 \vee x_2 \ol{x}_3\
\ol{x}_4=(x_1 \vee x_2)(x_1 \vee \ol{x}_3)(x_1 \vee \ol{x}_4)(x_2 \vee \ol{x}_3)(x_2 \vee \ol{x}_4)(\ol{x}_3 \vee \ol{x}_4)$.

This lemma, together with  \raf{eq-1}, implies that 
for a CNF $\gvp$ and a nonnegative integer $\ga$, we have 
\begin{eqnarray*}
 \sigma_{-\ga}(\gvp)&=&\AN_{ c \in \gvp}\Bigl(\OR_{{S \subseteq
  c:}\atop{|S|=\ga+1}}\bigl(\AN_{\ell \in S}\ell\bigr)\Bigr) \ =\ 
\AN_{ c \in \gvp}\Bigl(\AN_{{S \subseteq
  c:}\atop{|S|=|c|-\ga}}\bigl(\OR_{\ell \in S}\ell\bigr)\Bigr), 
\end{eqnarray*}
where we regard $c$ as a set of literals.

\begin{lem}\label{lemma-2}
Let $\Sigma$ be a Horn theory, and let $c$ be a clause. 
For a nonnegative integer $\ga$, 
if there
exists a clause $d \in \Sigma$ such that $|N(d)\setminus N(c)| \le
\ga-1$ or $(|N(d)\setminus N(c)|=\ga$ and $P(d) \subseteq P(c))$,
then  we have  $\sigma_{-\ga}(\Sigma) \models c$. 
\end{lem}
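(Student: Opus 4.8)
My plan is to prove the contrapositive directly from the semantic definition of the interior. By definition, $\sigma_{-\ga}(\Sigma)\models c$ asserts that every model of $\sigma_{-\ga}(\Sigma)$ satisfies $c$; equivalently, every $v\in\{0,1\}^n$ with $v\not\models c$ must fail to lie in $\modd(\sigma_{-\ga}(\Sigma))$, i.e.\ it must satisfy $\cN_\ga(v)\not\subseteq\modd(\Sigma)$. So I would fix an arbitrary $v$ with $v\not\models c$ and try to exhibit a single neighbour $w\in\cN_\ga(v)$ with $w\not\models\Sigma$. Since $d\in\Sigma$, it suffices to produce $w\in\cN_\ga(v)$ that falsifies the one clause $d$.

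Next I would record the coordinates that are forced by the two constraints. Because $v\not\models c$, we have $v_i=0$ for all $i\in P(c)$ and $v_i=1$ for all $i\in N(c)$. To falsify $d=\OR_{i\in P(d)}x_i\vee\OR_{i\in N(d)}\ol{x}_i$, I define $w$ from $v$ by setting $w_i=0$ for $i\in P(d)$, $w_i=1$ for $i\in N(d)$, and $w_i=v_i$ otherwise; this $w$ falsifies $d$ by construction, and $\parallel w-v\parallel$ equals the number of coordinates actually flipped. The heart of the argument, and the step I expect to be the main obstacle, is bounding this Hamming distance by $\ga$ using exactly the hypotheses provided.

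To count the flips, note that $P(d)$ and $N(d)$ are disjoint, so the flipped coordinates split into two disjoint groups. A coordinate $i\in N(d)$ is flipped only if $v_i=0$; but $v_i=1$ whenever $i\in N(d)\cap N(c)$, so flips inside $N(d)$ can occur only on $N(d)\setminus N(c)$, giving at most $|N(d)\setminus N(c)|$ of them. Symmetrically, a coordinate $i\in P(d)$ is flipped only if $v_i=1$, while $v_i=0$ for $i\in P(d)\cap P(c)$, so flips inside $P(d)$ can occur only on $P(d)\setminus P(c)$, at most $|P(d)\setminus P(c)|$ of them. Hence $\parallel w-v\parallel\le|N(d)\setminus N(c)|+|P(d)\setminus P(c)|$.

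Finally I would close with the case distinction that matches the hypothesis, which is precisely where Hornness is consumed. If $|N(d)\setminus N(c)|\le\ga-1$, then since $\Sigma$ is Horn we have $|P(d)|\le1$, so $|P(d)\setminus P(c)|\le1$ and the total is at most $(\ga-1)+1=\ga$. If instead $|N(d)\setminus N(c)|=\ga$ and $P(d)\subseteq P(c)$, then $|P(d)\setminus P(c)|=0$ and the total is at most $\ga$. In either case $w\in\cN_\ga(v)$ and $w\not\models d$, whence $w\not\models\Sigma$ and $\cN_\ga(v)\not\subseteq\modd(\Sigma)$, so $v\not\models\sigma_{-\ga}(\Sigma)$, as required. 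The subtle point that makes the two cases asymmetric is that the single positive literal permitted by Hornness must be accounted for: it costs one unit of the distance budget in the first case, and is forced to be free (because $P(d)\subseteq P(c)$ removes the $P(d)$ flip entirely) in the tight case $|N(d)\setminus N(c)|=\ga$.
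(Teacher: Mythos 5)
Your proof is correct, and it takes a genuinely different route from the paper's. The paper stays at the formula level: by \raf{eq-1} it reduces the claim to showing $\sigma_{-\ga}(d)\models c$ for the single clause $d$, and then reads this off from the expansion of $\sigma_{-\ga}(d)$ given in Lemma \ref{prop3} --- under either hypothesis at most $\ga$ literals of $d$ fail to be literals of $c$, so every $(\ga+1)$-element subterm of $d$ contains a literal of $c$, whence $\sigma_{-\ga}(d)\models c$ and therefore $\sigma_{-\ga}(\Sigma)\models c$. You instead argue semantically, proving the contrapositive directly from the definition of the interior: for each countermodel $v$ of $c$ you construct an explicit neighbor $w\in\cN_\ga(v)$ falsifying $d$, with the Hamming-distance bound $|N(d)\setminus N(c)|+|P(d)\setminus P(c)|\le\ga$ supplied by the same case analysis. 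The two arguments consume the hypotheses at exactly the same point --- in particular Hornness, $|P(d)|\le 1$, is what saves the first case in both --- so they are morally the same counting argument, differently packaged. The paper's version is shorter given its machinery and matches the toolkit used throughout the section (Lemmas \ref{lemma-1} and \ref{lemma-3} and the correctness of the algorithm are phrased with the same ingredients), while yours is self-contained: it needs neither Lemma \ref{prop3} nor \raf{eq-1}, and in effect re-proves the relevant instance of Lemma \ref{prop3} by exhibiting the witnessing models, which makes the role of each hypothesis more transparent.
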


\begin{proof}
If $\Sigma$ has a  clause $d$ such that $|N(d)\setminus N(c)| \le
\ga-1$, then  $|(N(d)\setminus N(c)) \cup P(d)| \leq \ga$ holds. 
Thus by Lemma \ref{prop3}, we have 
$\sigma_{-\ga}(d) \models \OR_{i \in  N(c) \cap N(d)}\ol{x}_i \models c$.
Therefore,  by \raf{eq-1}, $\sigma_{-\ga}(\Sigma) \models c$ holds. 

On the other hand, 
if $\Sigma$ has a  clause $d$ such that $|N(d)\setminus N(c)|= \ga$ and  $P(d) \subseteq P(c)$, then  
 by Lemma \ref{prop3},  we have 
$\sigma_{-\ga}(d) \models \OR_{i \in P(c)} x_i \vee \OR_{i \in  N(c) \cap N(d)}\ol{x}_i \models c$.
Therefore,  by \raf{eq-1}, $\sigma_{-\ga}(\Sigma) \models c$ holds. 
\end{proof}

\begin{lem}\label{lemma-1}
Let $\Sigma$ be a Horn theory, and let $c$ be a clause. 
For a nonnegative integer $\ga$, 
if {\rm (i)} $|N(d)\setminus N(c)| \ge \ga$ holds 
for all $d \in \Sigma$ and {\rm (ii)} $\emptyset \not= P(d) \subseteq N(c)$ holds for
 all  $d \in \Sigma$  with  $|N(d)\setminus N(c)| = \ga$, 
then we have  $\sigma_{-\ga}(\Sigma)\not\models c$. 
\end{lem}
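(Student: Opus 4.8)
The plan is to establish $\sigma_{-\ga}(\Sigma)\not\models c$ directly by exhibiting a single witness: a model $v$ that lies in the $\ga$-interior of $\Sigma$ yet falsifies $c$. Since $v\not\models c$ amounts to $v_i=1$ for all $i\in N(c)$ and $v_i=0$ for all $i\in P(c)$, the freedom lies only in the remaining coordinates, and I would pin those down by a clean global choice rather than tuning them clause by clause.

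The tool I would set up first is a distance reformulation of interior membership. By definition $v\models\sigma_{-\ga}(\Sigma)$ iff $\cN_{\ga}(v)\subseteq\modd(\Sigma)$, i.e.\ iff no assignment within Hamming distance $\ga$ of $v$ falsifies any clause of $\Sigma$. For a fixed Horn clause $d$, the assignments falsifying $d$ are exactly those with $x_i=0$ for $i\in P(d)$ and $x_i=1$ for $i\in N(d)$; because $P(d)\cap N(d)=\emptyset$, the minimum number of bit-flips carrying $v$ into this set decomposes additively, giving the flip-distance $|\{i\in P(d):v_i=1\}|+|\{i\in N(d):v_i=0\}|$. Hence
\[
 v\models\sigma_{-\ga}(\Sigma)\quad\Longleftrightarrow\quad
 |\{i\in P(d):v_i=1\}|+|\{i\in N(d):v_i=0\}|\ \ge\ \ga+1\ \ \text{for every }d\in\Sigma.
\]

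I would then take the candidate $v$ with $\ON(v)=N(c)$, i.e.\ $v_i=1$ precisely when $i\in N(c)$ and $v_i=0$ otherwise. This falsifies $c$: every index in $N(c)$ has $v_i=1$, killing the corresponding literal $\ol{x}_i$, while every index in $P(c)$ (disjoint from $N(c)$) has $v_i=0$, killing the literal $x_i$. Under this choice the two counts simplify, since $v_i=0\iff i\notin N(c)$: one gets $|\{i\in N(d):v_i=0\}|=|N(d)\setminus N(c)|$ and $|\{i\in P(d):v_i=1\}|=|P(d)\cap N(c)|$, so the flip-distance to $d$ equals $|N(d)\setminus N(c)|+|P(d)\cap N(c)|$.

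It remains to verify this sum is at least $\ga+1$ for every $d$, and here the case split driven by (i) is where the work concentrates. If $|N(d)\setminus N(c)|\ge\ga+1$ the bound is immediate. The delicate boundary case is $|N(d)\setminus N(c)|=\ga$: condition (ii) supplies $\emptyset\ne P(d)\subseteq N(c)$, and Horn-ness ($|P(d)|\le1$) forces $|P(d)|=1$ with that single index inside $N(c)$, so $|P(d)\cap N(c)|=1$ and the sum reaches exactly $\ga+1$. Thus the distance inequality holds for all $d\in\Sigma$, giving $v\models\sigma_{-\ga}(\Sigma)$ while $v\not\models c$, which is what we want. The main obstacle I anticipate is not the case analysis but getting the distance characterization exactly right—in particular the additive decomposition of the flip-distance over $P(d)$ and $N(d)$—since the whole argument collapses onto that one counting identity and the precise role of (ii) at the threshold $|N(d)\setminus N(c)|=\ga$.
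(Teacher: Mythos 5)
Your proof is correct and takes essentially the same approach as the paper: both use the witness $v$ with $\ON(v)=N(c)$ (the unique minimal model falsifying $c$) and the same two-case analysis on $|N(d)\setminus N(c)|$ for each clause $d\in\Sigma$. The only cosmetic difference is that the paper certifies $v\models\sigma_{-\ga}(d)$ by exhibiting a term of $\ga+1$ true literals of $d$ via Lemma~\ref{prop3} and combining clauses via \raf{eq-1}, whereas you re-derive the equivalent per-clause criterion from the definition as a Hamming flip-distance count --- the same thing, since the number of literals of $d$ true at $v$ is exactly that distance.
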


\begin{proof}
Let $v$ be the unique minimal model that does not satisfy $c$, 
i.e., $v_i=1$ if $\ol{x}_i \in c$ and $0$, otherwise. 
We show that $v \models \sigma_{-\ga}(\Sigma)$, 
which  implies $\sigma_{-\ga}(\Sigma)\not\models c$. 

Let $d$ be a clause in $\Sigma$ with $|N(d)\setminus N(c)| \ge
\ga+1$, 
and let $t$ be a term obtained by conjuncting arbitrary $\ga+1$ literals in $N(d)\setminus N(c)$. 
Then we have $t(v)=1$ and $t \models \sigma_{-\ga}(d)$ by Lemma
\ref{prop3}.  
On the other hand, for a clause $d$ in $\Sigma$ with  $|N(d)\setminus N(c)| = \ga$, 
let $t$ be a term obtained by conjuncting all literals in
$(N(d)\setminus N(c)) \cup P(d)$. 
Then we have $|t|=\ga+1$ and $t \models \sigma_{-\ga}(d)$ by Lemma
\ref{prop3}.  Moreover, it holds that $t(v)=1$ by $P(d) \subseteq N(c)$.
Therefore, by \raf{eq-1}, we have $v \models \sigma_{-\ga}(\Sigma)$. 
\end{proof}

By Lemmas \ref{lemma-2} and \ref{lemma-1}, we can easily answer 
the deductive queries, if $\Sigma$ satisfies certain conditions mentioned in
them. 
In the remaining case, 
we have the following lemma.

\begin{lem}\label{lemma-3}
For a Horn theory $\Sigma$ that satisfies none of the conditions
in Lemmas {\rm \ref{lemma-2}} and {\rm \ref{lemma-1}}, 
let  $d$ be a clause  in $\Sigma$ such that $|N(d)\setminus N(c)| =
\ga$, and $P(d)=P(d) \setminus (P(c) \cup N(c))=\{j \}$. 
Then $\sigma_{-\ga}(\Sigma) \models c \vee x_j$ holds.
\end{lem}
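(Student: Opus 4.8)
My plan is to push everything down to the single witness clause $d$ and then exhibit one explicit conjunct of $\sigma_{-\ga}(d)$ that already implies $c \vee x_j$. Since $d \in \Sigma$, the distributivity \raf{eq-1} gives $\sigma_{-\ga}(\Sigma) = \AN_{d' \in \Sigma}\sigma_{-\ga}(d')$, so that $\sigma_{-\ga}(\Sigma) \models \sigma_{-\ga}(d)$. Hence it suffices to prove the single-clause statement $\sigma_{-\ga}(d) \models c \vee x_j$, and the conclusion for $\Sigma$ follows a fortiori, exactly as in the proofs of Lemmas \ref{lemma-2} and \ref{lemma-1}.

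For the single clause I would use the second (CNF) form in Lemma \ref{prop3}, namely $\sigma_{-\ga}(d) = \AN_{S \subseteq d,\,|S|=|d|-\ga}\bigl(\OR_{\ell \in S}\ell\bigr)$, and produce one index set $S$ whose associated clause is a subclause of $c \vee x_j$; since a subclause implies its superclause, the whole conjunction $\sigma_{-\ga}(d)$ then implies $c \vee x_j$. The natural choice is to let $S$ consist of all literals of $d$ that occur in $c \vee x_j$. Writing $d = x_j \vee \OR_{i \in N(d)}\ol{x}_i$ (recall $P(d)=\{j\}$, using that $\Sigma$ is Horn), these literals are exactly $x_j$ together with $\{\ol{x}_i : i \in N(d)\cap N(c)\}$, because a negative literal $\ol{x}_i$ appears in $c \vee x_j$ iff $i \in N(c)$. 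By construction $S \subseteq c \vee x_j$, so $\OR_{\ell \in S}\ell \models c \vee x_j$; the only thing left to check is that $S$ has the prescribed size.

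The one quantitative point is therefore the cardinality of $S$, and here the hypothesis $|N(d)\setminus N(c)| = \ga$ does the work: it yields $|N(d)\cap N(c)| = |N(d)| - \ga$, whence $|S| = 1 + (|N(d)| - \ga) = |d| - \ga$ using $|d| = |P(d)| + |N(d)| = 1 + |N(d)|$. This is precisely the admissible size in the CNF form of Lemma \ref{prop3}, so $S$ indexes a genuine conjunct and the single-clause step closes. I do not anticipate a real obstacle: the argument is a short counting identity feeding the CNF form of Lemma \ref{prop3}, and the remaining hypothesis $P(d) = P(d)\setminus(P(c)\cup N(c)) = \{j\}$ (i.e.\ $j \notin P(c)\cup N(c)$) is needed only to ensure that $x_j$ is a literal not already present in $c$, so that $c \vee x_j$ is a strictly weaker — and hence usable — conclusion rather than a restatement of $\sigma_{-\ga}(\Sigma)\models c$. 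The only place demanding care is applying the correct form of Lemma \ref{prop3} (the $|S| = |d|-\ga$ form, not the dual $|S| = \ga+1$ form) and invoking the subclause-implies-superclause direction correctly.
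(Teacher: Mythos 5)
Your proof is correct and is essentially the paper's own argument: the paper likewise notes that $\sigma_{-\ga}(d) \models \OR_{i \in N(c) \cap N(d)} \ol{x}_i \vee x_j \models c \vee x_j$ (the very conjunct your set $S$ indexes, via the $|S|=|d|-\ga$ form of Lemma~\ref{prop3}) and then lifts this to $\sigma_{-\ga}(\Sigma) \models c \vee x_j$ by \raf{eq-1}. Your write-up merely makes the cardinality check $|S| = |d|-\ga$ explicit, which the paper leaves implicit.
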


\begin{proof}
By Lemma \ref{prop3}, we have $\sigma_{-\ga}(d) \models \OR_{i \in  N(c) \cap
 N(d)} \ol{x}_i  \vee x_j \models c \vee x_j$.
This implies  $\sigma_{-\ga}(\Sigma) \models c \vee x_j$ by \raf{eq-1}. 
 \end{proof}

{From} this lemma, we have only to check a deductive query $\sigma_{-\ga}(\Sigma)
\models c \vee \ol{x}_j$, instead of $\sigma_{-\ga}(\Sigma)
\models c$. 
Since  $|c| < |c \vee \ol{x}_j| \leq n$, 
we can answer the deduction by checking the conditions in Lemmas {\rm
\ref{lemma-2}} and {\rm \ref{lemma-1}} at most $n$ times.

\nop{
i.e., the case that satisfies
all three conditions below. 
\begin{description}
\item[\hspace*{.3cm}{\rm (I)}]
$|N(c')\setminus N(c)| \ge \ga$ holds  for all $c' \in \Sigma$, 
\item[\hspace*{.3cm}{\rm (I$\!$I)}] 
$P(c') \cap P(c)=\emptyset$ holds for all $c' \in \Sigma$ with $|N(c')\setminus N(c)|=\ga$, and 

\item[\hspace*{.3cm}{\rm (I$\!$I$\!$I)}] \parbox[t]{11cm}
{there exists a clause $c'$ in $\Sigma$ such that  $|N(c')\setminus N(c)| = \ga$ and $P(c') \cap N(c) =\emptyset$.}
\end{description}
Let 
\begin{eqnarray}
\Sigma^*&=&\{ c' \in \Sigma \mid |N(c')\setminus N(c)| = \ga, P(c') \cap N(c) =\emptyset
\},  
\end{eqnarray}
and let $P=\bigcup_{c' \in \Sigma^*}P(c')$. 
}

\begin{algorithm}[htbp]
 \caption{{Deduction-Interior-from-Horn-Theory}}
\begin{description}
\setlength{\itemsep}{0.1cm}
 \item[Input:] A Horn theory
	    $\Sigma$, a clause $c$ and a nonnegative integer $\ga$. 

 \item[Output:] Yes, if $\sigma_{-\ga}(\Sigma) \models c$; Otherwise,
		      No.
\smallskip

 \item[Step\,0.] Let $N:=N(c)$ and $P:=P(c)$.

 \item[Step\,1.] /* Check the condition in Lemma \ref{lemma-2}. */ \\
{\bf If} there
exists a clause $d \in \Sigma$ such that $|N(d)\setminus N| \le
\ga-1$ or $(|N(d)\setminus N|=\ga$ and $P(d) \subseteq P)$, 
{\bf then} output Yes and halt. 

 \item[Step\,2.] /* Check the condition in Lemma \ref{lemma-1}. */ \\
{\bf If} $P(d) \subseteq N$ holds for all  $d \in \Sigma$  with  $|N(d)\setminus N| = \ga$, 
{\bf then} output No and halt.

 \item[Step\,3.] /* Update $N$ by Lemma \ref{lemma-3}. */ \\
For a clause $d$ in $\Sigma$ such that $|N(d)\setminus N| =
\ga$ and $P(d)=P(d) \setminus (P\cup N)=\{j \}$,  
update $N:=N \cup \{j\}$ and return to Step 1.  \qed
\end{description}
\end{algorithm}

We can see that a straightforward implementation of the algorithm
requires $O(n(\parallel \!\!\Sigma \!\!\parallel+|c|))$ time, 
where $\parallel\!\! \Sigma \!\!\parallel$ denotes the length of $\Sigma$, i.e., 
$\parallel \!\!\Sigma \!\!\parallel=\sum_{d \in \Sigma}|d|$.  
However, it is not difficult to see that we have a linear time algorithm for
the problem, if $N(d)\setminus
N$ for $d \in \Sigma$ is maintained  by using the proper data structure.

\begin{thm}
\label{th-interior-horn-theory}
Given  a Horn theory $\Sigma$, a clause $c$ and a nonnegative integer $\ga$, 
a deductive query $\sigma_{-\ga}(\Sigma) \models c$ can be answered in 
linear time, i.e., $O(\parallel \!\!\Sigma \!\!\parallel+|c|)$ time. \qed
\end{thm}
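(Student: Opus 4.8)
The plan is to establish the two halves of the statement—correctness of Algorithm Deduction-Interior-from-Horn-Theory and its linear running time—separately, leaning on Lemmas~\ref{lemma-2}, \ref{lemma-1} and \ref{lemma-3} for the former and on an incremental data structure for the latter. For correctness I would maintain the invariant that the deductive status of the \emph{current} clause (the one whose positive index set is $P$ and negative index set is $N$) always equals that of the input clause $c$, and then check that each halting step returns the right answer for the current clause. Step~1 outputs Yes exactly under the hypothesis of Lemma~\ref{lemma-2}, so $\sigma_{-\ga}(\Sigma)\models c$ holds there. For Step~2, note that upon reaching it the algorithm has passed Step~1, whence $|N(d)\setminus N|\ge\ga$ for every $d\in\Sigma$; moreover, since $\Sigma$ is Horn we have $|P(d)|\le 1$, and the failure of Step~1 rules out $P(d)=\emptyset$ for any $d$ with $|N(d)\setminus N|=\ga$ (such a $d$ would give $P(d)\subseteq P$). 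Hence the Step~2 test ``$P(d)\subseteq N$'' is precisely ``$\emptyset\neq P(d)\subseteq N$'', and together with condition~(i) this is exactly the hypothesis of Lemma~\ref{lemma-1}, so the output No is correct.

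The crux of correctness is Step~3. When the algorithm reaches it, it has passed Steps~1 and~2, so there is a clause $d$ with $|N(d)\setminus N|=\ga$ and $P(d)=\{j\}$, where the failure of Step~1 gives $j\notin P$ and the failure of Step~2 gives $j\notin N$, i.e.\ $j\notin P\cup N$. Lemma~\ref{lemma-3} then yields $\sigma_{-\ga}(\Sigma)\models c\vee x_j$ for the current clause. I would combine this with the consensus identity $(c\vee x_j)\wedge(c\vee\ol{x}_j)\equiv c$ to conclude that $\sigma_{-\ga}(\Sigma)\models c$ if and only if $\sigma_{-\ga}(\Sigma)\models c\vee\ol{x}_j$ (the reverse implication being immediate). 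Thus replacing $c$ by $c\vee\ol{x}_j$, i.e.\ updating $N:=N\cup\{j\}$, preserves the answer and justifies looping back to Step~1. Since $j\notin N$ before the update and $j\notin P=P(c)$ throughout, each pass through Step~3 strictly enlarges $N\subseteq\{1,\dots,n\}$, so the loop makes at most $n$ iterations and terminates, which together with the halting-step analysis gives correctness.

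For the running time, the naive bound is $O(n(\parallel\!\Sigma\!\parallel+|c|))$ because each iteration rescans $\Sigma$; to reach linear time I would maintain for every clause $d$ a counter $\mathrm{cnt}[d]=|N(d)\setminus N|$, Boolean membership arrays for $P$ and $N$, and a bucketing of the clauses by whether $\mathrm{cnt}[d]<\ga$, $=\ga$, or $>\ga$. Initialization costs $O(\parallel\!\Sigma\!\parallel+|c|)$. The decisive observation is that $N$ only grows, so every $\mathrm{cnt}[d]$ only decreases: when an index $j$ is added in Step~3, only the clauses containing $\ol{x}_j$ are touched, and each negative-literal occurrence in $\Sigma$ is touched at most once over the entire run, for a total of $O(\parallel\!\Sigma\!\parallel)$ decrement work. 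Each clause changes bucket at most twice (from $>\ga$ to $=\ga$ to $<\ga$), so bucket maintenance is $O(|\Sigma|)$; and to find the Step~3 pivot I would keep a list of clauses that enter the $=\ga$ bucket with $j\notin N$, using lazy deletion to skip entries made stale by a later decrement or a later insertion into $N$. Since each clause enters the $=\ga$ bucket at most once, this list is processed in $O(|\Sigma|)$ total, and summing gives $O(\parallel\!\Sigma\!\parallel+|c|)$.

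The step I expect to be the main obstacle is precisely this amortized analysis: guaranteeing that the repeated testing of the Lemma~\ref{lemma-2}/\ref{lemma-1} conditions and the repeated selection of a Step~3 pivot do not cost $\Theta(\parallel\!\Sigma\!\parallel)$ \emph{per} iteration, despite there being up to $n$ iterations. The monotonicity of $N$—and hence of all counters—is what makes the charging argument go through, since it ensures that every negative-literal occurrence, every bucket transition, and every list entry is paid for exactly once, independently of the number of iterations.
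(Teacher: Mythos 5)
Your proposal is correct and takes essentially the same approach as the paper: correctness rests on Lemmas~\ref{lemma-2}, \ref{lemma-1} and \ref{lemma-3} applied to the current clause of Algorithm~1, with the loop bounded by $n$ iterations, and linear time is obtained by incrementally maintaining $|N(d)\setminus N|$ for all $d\in\Sigma$. You merely flesh out two points the paper leaves implicit --- the consensus identity $(c\vee x_j)\wedge(c\vee\ol{x}_j)\equiv c$ justifying the replacement of $c$ by $c\vee\ol{x}_j$ in Step~3, and the concrete counter/bucket/lazy-deletion data structure behind the paper's remark that ``the proper data structure'' yields linear time --- and both are handled correctly.
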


\subsection{Exteriors}
Let us next consider the deduction for the $\ga$-exteriors of a  Horn
theory. 
In contrast to the interior case, we have the following negative
result. 

\begin{thm}
\label{theo:int:f:g}
Given  a Horn theory $\Sigma$, a clause $c$ and a positive integer $\ga$, 
it is {\rm co}-{\rm NP}-complete to decide whether a deductive query 
 $\sigma_{\ga}(\Sigma) \models c$ holds, even if $P(c)=\emptyset$.  
\end{thm}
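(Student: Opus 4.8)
The plan is to establish membership in co-NP and co-NP-hardness separately, with the hardness coming from a reduction from \textsc{Vertex Cover} that already yields $P(c)=\emptyset$.

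For membership I would look at the complementary problem $\sigma_{\ga}(\Sigma)\not\models c$. This asserts the existence of a model $v$ with $v\models\sigma_{\ga}(\Sigma)$ and $v\not\models c$. By the definition of the exterior, $v\models\sigma_{\ga}(\Sigma)$ means $\cN_{\ga}(v)\cap\modd(\Sigma)\neq\emptyset$, so such a $v$ is certified by a pair $(v,w)$ with $w\models\Sigma$, $\parallel w-v\parallel\le\ga$, and $v\not\models c$. The pair has polynomial size, and all three conditions are checkable in polynomial time (evaluating $\Sigma$ and $c$, and counting coordinate differences), so the complement is in NP and the deduction problem is in co-NP.

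For hardness I would first reformulate the complement in the special case $P(c)=\emptyset$ with $N(c)=\At$. Then $c=\OR_{i\in\At}\ol{x}_i$, whose unique non-model is the all-ones vector $\mathbf 1$; hence $\sigma_{\ga}(\Sigma)\not\models c$ holds iff $\mathbf 1\models\sigma_{\ga}(\Sigma)$, i.e.\ iff there is a model $w\models\Sigma$ with $\parallel w-\mathbf 1\parallel\le\ga$, i.e.\ iff $\Sigma$ has a model with at most $\ga$ zero coordinates. Given a \textsc{Vertex Cover} instance $(G=(V,E),k)$, I would set $n=|V|$, take $\Sigma=\{\,\ol{x}_i\vee\ol{x}_j\mid\{i,j\}\in E\,\}$ (a Horn theory, since each clause has no positive literal), $c=\OR_{i\in V}\ol{x}_i$ (so $P(c)=\emptyset$), and $\ga=k$ (a positive integer, taking $k\ge 1$ without loss of generality). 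For a model $w$, its zero set $\OFF(w)$ satisfies $\Sigma$ exactly when every edge has an endpoint in $\OFF(w)$, i.e.\ when $\OFF(w)$ is a vertex cover, and the number of zeros of $w$ is $|\OFF(w)|$. Thus $\Sigma$ has a model with at most $\ga=k$ zeros iff $G$ has a vertex cover of size at most $k$.

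Combining the two observations, $\sigma_{\ga}(\Sigma)\not\models c$ holds iff $G$ has a vertex cover of size $\le k$, so $\sigma_{\ga}(\Sigma)\models c$ holds iff $G$ has \emph{no} such cover, which is the co-NP-complete complement of \textsc{Vertex Cover}; together with membership this gives co-NP-completeness with $P(c)=\emptyset$. The step I expect to require the most care is the reformulation equivalence: one must verify that, for a fixed non-model of $c$, the closest model $w$ of $\Sigma$ is obtained by keeping $w$ arbitrary off $N(c)$ and forcing ones on $N(c)$, so that the relevant Hamming distance reduces exactly to the count of zeros of $w$ inside $N(c)$. Once this is pinned down, the correspondence between negative $2$-clauses and graph edges is routine.
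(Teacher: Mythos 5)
Your proposal is correct and takes essentially the same approach as the paper: the co-NP membership argument is the same certificate idea (a model of $\Sigma$ within Hamming distance $\ga$ of a non-model of $c$), and your hardness construction $\Sigma=\{\ol{x}_i\vee\ol{x}_j\mid\{i,j\}\in E\}$, $c=\OR_{i\in V}\ol{x}_i$ is literally the paper's, which reduces from \textsc{Independent Set} with $\ga=n-k$. Since $U$ is a vertex cover iff $V\setminus U$ is independent, your \textsc{Vertex Cover} reduction with $\ga=k$ is the same reduction viewed through complementation, and it is equally valid.
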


\begin{proof}
By definition, $\sigma={\ga}(\Sigma) \not\models c$ if and only if 
there exists a model $v$ of $\Sigma$ such that some model in $\cN_{\ga}(v)$
does not satisfy $c$. The latter is equivalent to the condition that 
there exists a model $v$ of $\Sigma$ such that 
$|\ON(v) \cap P(c)|+|\OFF(v) \cap N(c)| \le \ga$, which can be checked in polynomial time. Thus the problem is in co-NP.

We then show the hardness by reducing a well-known NP-complete problem
 {\sc Independent Set} to the complement of our  problem. {\sc
 Independent Set} is the problem of deciding if a given graph $G=(V,E)$
 has an independent set $W \subseteq V $ such that $|W| \geq k$ for
 a given integer $k$. Here we call a subset $W \subseteq V$ is an {\em
 independent set} of $G$ if $|W \cap e| \leq 1$ for all edges $e \in E$.
For a problem instance $G=(V=\{1,2,\ldots,n\},E)$ and $k$ of  {\sc Independent Set},
let us define a Horn theory $\Sigma_G$ over $\At=\{x_1,x_2,\ldots,x_n\}$
by
\[
 \Sigma_{G} = \{ (\bar{x}_i \vee \bar{x}_j) \mid \{i,j\}\in E\}.
\]
Let $c=\bigvee_{i=1}^n \ol{x}_i$ and $\ga=n-k$.
Note that $(11\cdots 1)$ is the unique model that does not satisfy 
$c$. 
Thus $\sigma_{\ga}(\Sigma)\not\models c$ if and only if 
$\sigma_{\ga}(\Sigma)(11\cdots 1)=1$.  
Since $W$ is an independent set of $G$ if and only if 
$\Sigma_{G}$ contains a model $w$ defined by $\ON(w)=W$, 
$\sigma_{\ga}(\Sigma_{G})(11\cdots 1)=1$ is equivalent to the condition that 
$G$ has an independent set of size at least $k\,(=n-\ga)$. 
This completes the proof. 
\end{proof}

We remark that this result can also be derived from the ones in
\cite{interior}. 

However, by using the next lemma, 
a deductive query can be answered in polynomial time, if $\ga$ or $N(c)$ is
small.

\begin{lem}\label{intex}
Let $\Sigma_1$ and 
$\Sigma_2$ be theories. For a nonnegative integer $\ga$, 
Then $\sigma_{\ga} (\Sigma_1) \models \Sigma_2$  
if and only if 
$\Sigma_1 \models 
\sigma_{-\ga}(\Sigma_2)$.  
\end{lem}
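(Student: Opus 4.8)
The plan is to unfold both sides into purely model-theoretic inclusions and then drive everything by the symmetry of the Hamming neighborhood. Recall that $\sigma_{\ga}(\Sigma_1)\models\Sigma_2$ means $\modd(\sigma_{\ga}(\Sigma_1))\subseteq\modd(\Sigma_2)$, and $\Sigma_1\models\sigma_{-\ga}(\Sigma_2)$ means $\modd(\Sigma_1)\subseteq\modd(\sigma_{-\ga}(\Sigma_2))$. Substituting the defining equations for the exterior and interior, the first inclusion becomes: every $w$ with $\cN_{\ga}(w)\cap\modd(\Sigma_1)\neq\emptyset$ lies in $\modd(\Sigma_2)$; the second becomes: every $v\in\modd(\Sigma_1)$ satisfies $\cN_{\ga}(v)\subseteq\modd(\Sigma_2)$. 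The single fact that links them is that the $\ga$-neighborhood relation is \emph{symmetric}: since $\parallel w-v\parallel=\parallel v-w\parallel$, we have $w\in\cN_{\ga}(v)$ iff $v\in\cN_{\ga}(w)$. Conceptually the lemma says that $\sigma_{\ga}(\cdot)$ and $\sigma_{-\ga}(\cdot)$ form an adjoint pair (a Galois connection) with respect to the entailment order, the adjunction being the familiar one between dilation and erosion by a symmetric structuring element; note that no Horn structure is used, consistent with the statement being phrased for arbitrary theories.

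For the forward direction $(\Rightarrow)$, I would fix an arbitrary $v\in\modd(\Sigma_1)$ and show $\cN_{\ga}(v)\subseteq\modd(\Sigma_2)$. Take any $w\in\cN_{\ga}(v)$. By symmetry $v\in\cN_{\ga}(w)$, so $v\in\cN_{\ga}(w)\cap\modd(\Sigma_1)$, which witnesses $w\in\modd(\sigma_{\ga}(\Sigma_1))$; the hypothesis then yields $w\in\modd(\Sigma_2)$. Since $w$ was arbitrary, $\cN_{\ga}(v)\subseteq\modd(\Sigma_2)$, i.e.\ $v\in\modd(\sigma_{-\ga}(\Sigma_2))$, giving $\Sigma_1\models\sigma_{-\ga}(\Sigma_2)$.

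For the backward direction $(\Leftarrow)$, I would take any $w\in\modd(\sigma_{\ga}(\Sigma_1))$ and choose a witness $v\in\cN_{\ga}(w)\cap\modd(\Sigma_1)$. From the hypothesis $v\in\modd(\Sigma_1)\subseteq\modd(\sigma_{-\ga}(\Sigma_2))$, so $\cN_{\ga}(v)\subseteq\modd(\Sigma_2)$. Because $v\in\cN_{\ga}(w)$, symmetry gives $w\in\cN_{\ga}(v)$, hence $w\in\modd(\Sigma_2)$; thus $\modd(\sigma_{\ga}(\Sigma_1))\subseteq\modd(\Sigma_2)$. I do not anticipate a genuine obstacle here: the argument is a two-line diagram chase in each direction, and the only point demanding care is to state the neighborhood symmetry explicitly and to keep straight which model serves as the center of each neighborhood, since the roles of $v$ and $w$ swap between the two directions.
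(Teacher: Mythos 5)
Your proof is correct, but it takes a genuinely different route from the paper's. The paper proves the lemma purely algebraically, as a unit--counit argument for an adjunction: for the if part it applies $\sigma_{\ga}$ to the hypothesis $\Sigma_1 \models \sigma_{-\ga}(\Sigma_2)$ (using the monotonicity of the exterior operator stated in Section~\ref{sec-def}) and then collapses $\sigma_{\ga}(\sigma_{-\ga}(\Sigma_2)) \models \Sigma_2$ by the composition property \raf{eq--0} with $\ga=\gb$; for the only-if part it symmetrically applies $\sigma_{-\ga}$ and uses $\Sigma_1 \models \sigma_{-\ga}(\sigma_{\ga}(\Sigma_1))$, again from \raf{eq--0}. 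You instead unfold both entailments into inclusions of model sets and close the loop with the symmetry $w \in \cN_{\ga}(v) \Leftrightarrow v \in \cN_{\ga}(w)$ of the Hamming ball; your diagram chase in each direction is sound, and you correctly identify this as the only fact needed. What the paper's route buys is brevity given machinery it has already imported from \cite{interior}: two one-line chains, no mention of individual models. What your route buys is self-containedness and transparency about where the duality comes from: you never invoke \raf{eq--0}, and your argument makes clear that the lemma holds for the dilation/erosion pair of \emph{any} symmetric neighborhood relation, not just Hamming balls --- indeed, the properties in \raf{eq--0} that the paper cites are themselves proved by arguments of exactly your kind, so the two proofs are the abstract and concrete faces of the same Galois connection. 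You also correctly note that no Horn structure is used, which matches the paper's statement for arbitrary theories.
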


\begin{proof}
For the if part, 
if $\Sigma_1 \models \sigma_{-\ga}(\Sigma_2)$, then 
we have $\sigma_{\ga}(\Sigma_1) $ $\models
\sigma_{\ga}(\sigma_{-\ga}(\Sigma_2)) \models \Sigma_2$ by \raf{eq--0}. 
On the other hand, if $\sigma_{\ga} (\Sigma_1) \models \Sigma_2$, 
then we have $ \Sigma_1 \models \sigma_{-\ga}(\sigma_{\ga} (\Sigma_1)) \models 
\sigma_{-\ga}(\Sigma_2)$ by \raf{eq--0}. 
\end{proof}

{From} Lemma \ref{intex}, the deductive query for the $\ga$-interior of
a theory $\Sigma$,
i.e.,  $\sigma_{\ga} (\Sigma) \models c$ for a given clause $c$ is equivalent to the condition
that $\Sigma \models 
\sigma_{-\ga}(c)$.  
Since we have $\sigma_{-\ga}(c)=\AN_{{S \subseteq
  c:}\atop{|S|=|c|-\ga}}\bigl(\OR_{\ell \in S}\ell\bigr)$ by Lemma \ref{prop3}, 
the deductive query for the $\ga$-interior can be done by checking
${{|c|}\choose{\ga}}$ deductions for $\Sigma$. 
More precisely, 
we have the following lemma.

\begin{lem}
\label{lemma-adfe}
Let $\Sigma$ be a Horn theory, let $c$ be a clause, 
and $\ga$ be  a nonnegative integer. 
Then $\sigma_\ga(\Sigma) \models c$ holds if and only if, for each 
subset $S$ of $N(c)$ such that $|S| \geq |N(c)|-\ga$, 
at least  $(\ga-|N(c)|+|S|+1)$ 
$j$'s  in $P(c)$ satisfy $\Sigma \models \OR_{i \in S}\ol{x}_i \vee
 x_j$.  
\end{lem}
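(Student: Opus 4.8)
The plan is to reduce the statement to a family of ordinary Horn deductions and then repackage those deductions combinatorially. First I would invoke Lemma~\ref{intex} to rewrite $\sigma_\ga(\Sigma)\models c$ as $\Sigma\models\sigma_{-\ga}(c)$, and then apply the second expression in Lemma~\ref{prop3}, namely $\sigma_{-\ga}(c)=\AN_{S'\subseteq c,\;|S'|=|c|-\ga}\bigl(\OR_{\ell\in S'}\ell\bigr)$. Hence $\sigma_\ga(\Sigma)\models c$ holds if and only if $\Sigma\models\OR_{\ell\in S'}\ell$ for every $(|c|-\ga)$-element subset $S'$ of the literal set $c$. I would split each such $S'$ into its negative part $S=\{i\in N(c):\ol{x}_i\in S'\}$ and its positive part $S_P=\{j\in P(c):x_j\in S'\}$, so that $|S|+|S_P|=|c|-\ga$. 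Writing $D_S=\OR_{i\in S}\ol{x}_i$, the task becomes: for every admissible $S$ and every positive part $S_P\subseteq P(c)$ of the matching size $|S_P|=|c|-\ga-|S|$, decide $\Sigma\models D_S\vee\OR_{j\in S_P}x_j$. The admissibility constraint $0\le|S_P|\le|P(c)|$ is exactly $|N(c)|-\ga\le|S|\le|c|-\ga$, which is where the hypothesis $|S|\ge|N(c)|-\ga$ in the statement comes from.

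The crux is a purely Horn-theoretic entailment fact: for a Horn theory $\Sigma$, a negative part $D_S$, and a \emph{nonempty} index set $A\subseteq P(c)$,
\[
\Sigma\models D_S\vee\OR_{j\in A}x_j
\quad\Longleftrightarrow\quad
\exists\,j\in A:\ \Sigma\models D_S\vee x_j .
\]
The ``$\Leftarrow$'' direction is immediate. For ``$\Rightarrow$'' I would argue contrapositively using the semantic characterization $\modd(\Sigma)=\ccap(\modd(\Sigma))$: if $\Sigma\not\models D_S\vee x_j$ for every $j\in A$, then for each $j$ I can pick a model $w^j\models\Sigma$ with $w^j_j=0$ and $w^j_i=1$ for all $i\in S$, and form $v=\AN_{j\in A}w^j$. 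Since $\modd(\Sigma)$ is closed under componentwise AND, $v\models\Sigma$; moreover $v_j=0$ for every $j\in A$ and $v_i=1$ for every $i\in S$, so $v$ falsifies $D_S\vee\OR_{j\in A}x_j$, giving $\Sigma\not\models D_S\vee\OR_{j\in A}x_j$. This closure argument is the step I expect to be the main obstacle, since everything else is bookkeeping; the point that makes it work is that each witness of non-entailment contributes a $0$ in its own coordinate while leaving every coordinate of $S$ at $1$, and that the componentwise AND preserves both features simultaneously.

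With this fact in hand I would finish by a counting argument for each fixed $S$. Let $J_S=\{\,j\in P(c):\Sigma\models D_S\vee x_j\,\}$ and put $m=|c|-\ga-|S|=|S_P|$. When $m\ge 1$, the entailment lemma shows that $\Sigma\models D_S\vee\OR_{j\in S_P}x_j$ iff $S_P\cap J_S\neq\emptyset$; requiring this for \emph{every} $m$-subset $S_P\subseteq P(c)$ is equivalent to the impossibility of choosing $m$ indices outside $J_S$, i.e.\ to $|P(c)|-|J_S|\le m-1$, which rearranges to $|J_S|\ge |P(c)|-m+1=\ga-|N(c)|+|S|+1$. Conjoining over all admissible $S$ gives exactly the stated condition. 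The one boundary I would treat separately is $m=0$, i.e.\ $|S|=|c|-\ga$ (which can occur only when $\ga\ge|P(c)|$): there $S_P=\emptyset$, the entailment lemma does not apply, and the requirement is simply $\Sigma\models D_S$. Making this empty-positive-part case explicit, and checking that the admissibility bounds on $|S|$ line up with the quantifier in the statement, are the only remaining details beyond the two arguments above.
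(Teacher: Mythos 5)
Your proposal takes essentially the same route as the paper's proof: combine Lemma~\ref{intex} with Lemma~\ref{prop3} to reduce $\sigma_\ga(\Sigma)\models c$ to ordinary Horn deductions of all subclauses of $c$ of size $|c|-\ga$, and then use the fact that a Horn theory entails a clause if and only if it entails a Horn strengthening of it. The difference is one of completeness: the paper's proof is three sentences long, citing that fact as known (``all the prime implicates of Horn theory are Horn'') and leaving both the split of each subclause into $(S,S_P)$ and the counting argument entirely to the reader, whereas you prove the entailment fact from the closure of $\modd(\Sigma)$ under componentwise AND (this is the standard and correct argument) and carry out the counting $|J_S|\ge |P(c)|-m+1$ explicitly. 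One remark: the boundary case $S_P=\emptyset$ that you defer at the end is not a mere detail but a genuine discrepancy with the statement being proved. If $\ga\ge|P(c)|$, there exist purely negative subclauses, i.e., sets $S\subseteq N(c)$ with $|S|=|c|-\ga$; your decomposition correctly demands $\Sigma\models\OR_{i \in S}\ol{x}_i$ for such $S$, while the lemma's condition demands $\ga-|N(c)|+|S|+1=|P(c)|+1$ witnesses in $P(c)$, which is impossible. So the lemma as literally stated can fail there: for $\Sigma=\{\ol{x}_1,\ol{x}_2\}$, $c=\ol{x}_1\vee\ol{x}_2$, $\ga=1$ we have $\sigma_\ga(\Sigma)\models c$, yet the stated condition is violated. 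Your explicit treatment of the empty positive part is exactly the repair the statement needs (and it agrees with the statement verbatim whenever $\ga<|P(c)|$); the paper's own proof glosses over this case entirely.
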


\begin{proof}
{From} Lemmas \ref{prop3} and \ref{intex}, 
$\sigma_{\ga}(\Sigma) \models c$ if and only if  $\Sigma \models \AN_{{S \subseteq
  c:}\atop{|S|=|c|-\ga}}\bigl(\OR_{\ell \in S}\ell\bigr)$. 
It is known that for a Horn theory $\Sigma$ and clause $d$, $\Sigma
 \models d$ if and only if $\Sigma \models \OR_{i \in N(d)} \ol{x}_i \vee x_j$
 holds for some $j \in P(d)$ (i.e., All the prime implicates of Horn
 theory are Horn). 
This proves the lemma. 
\end{proof}

This lemma implies that the deductive query can be answered by checking 
the number of $j$'s  in $P(c)$ that satisfy 
$\Sigma \models \OR_{i \in S}\ol{x}_i \vee
 x_j$ for each $S$. 
Since we can check this condition in linear time and 
there are  $\sum_{p=0}^\ga{{|N(c)|}\choose{p}}$ such $S$'s, we have the
following result, which complements Theorem \ref{theo:int:f:g} that the
problem is intractable, even if $P(c)=\emptyset$.  

\begin{thm}
\label{th-adfe}
Let $\Sigma$ be a Horn theory, let $c$ be a clause, and let $\ga$ be a
nonnegative integer. 
Then a deductive query $\sigma_{\ga}(\Sigma) \models c$ can
 be  answered in $O\Bigl( \sum_{p=0}^\ga{{|N(c)|}\choose{p}}
 \parallel\!\Sigma\!\parallel+ |P(c)|\Bigr)$
 time. 
In particular, it  is polynomially solvable, if $\ga=O(1)$  or
 $|N(c)|=O(\log \parallel\!\Sigma\!\parallel)$.
\end{thm}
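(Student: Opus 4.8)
The plan is to read off an algorithm directly from the characterization in Lemma~\ref{lemma-adfe} and bound its running time. By that lemma, $\sigma_\ga(\Sigma) \models c$ holds if and only if, for \emph{every} subset $S \subseteq N(c)$ with $|S| \geq |N(c)| - \ga$, the number of indices $j \in P(c)$ satisfying $\Sigma \models \bigvee_{i \in S}\ol{x}_i \vee x_j$ is at least $\ga - |N(c)| + |S| + 1$. So I would enumerate all such $S$, compute the relevant count for each, compare it against the corresponding threshold, and answer ``yes'' precisely when every $S$ passes (and ``no'' as soon as one fails). A useful early-termination to note is that when the threshold $\ga - |N(c)| + |S| + 1$ exceeds $|P(c)|$ the set $S$ can never pass, so the query returns ``no'' immediately.

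First I would count the subsets. Parametrizing $S$ by its complement $N(c)\setminus S$, the constraint $|S| \geq |N(c)| - \ga$ becomes $|N(c)\setminus S| \leq \ga$, so there are exactly $\sum_{p=0}^\ga \binom{|N(c)|}{p}$ subsets to process (this sum caps at $2^{|N(c)|}$ when $\ga \geq |N(c)|$, as it should). This already supplies the combinatorial factor appearing in the claimed bound.

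The key efficiency point, and the step I expect to be the main obstacle, is disposing of a fixed $S$ in a single linear-time pass rather than running $|P(c)|$ separate Horn deductions. Here I would invoke forward chaining, the linear-time procedure underlying the fact that $\Sigma \models d$ is decidable in $O(\|\Sigma\|)$ time: set every atom of $S$ to true and close under the definite Horn clauses of $\Sigma$. One $O(\|\Sigma\|)$ sweep then yields the set $T$ of all atoms forced true; if some negative (goal) clause becomes violated, $S$ is inconsistent with $\Sigma$, in which case $\Sigma \models \bigvee_{i\in S}\ol{x}_i \vee x_j$ for every $j$. Since $\Sigma \models \bigvee_{i \in S}\ol{x}_i \vee x_j$ holds exactly when $j \in T$ (or $S$ is inconsistent), a single closure computation simultaneously decides the condition for all $j \in P(c)$; marking the atoms of $P(c)$ in advance lets me read off $|T \cap P(c)|$ within the same sweep.

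Putting this together, the enumeration performs $\sum_{p=0}^\ga \binom{|N(c)|}{p}$ closure computations of cost $O(\|\Sigma\|)$ each, plus an $O(|P(c)|)$ overhead to set up the marking of $P(c)$, giving the stated total $O\bigl(\sum_{p=0}^\ga \binom{|N(c)|}{p}\,\|\Sigma\| + |P(c)|\bigr)$. The two ``in particular'' claims then follow by elementary estimates: if $\ga = O(1)$ the sum is $O(|N(c)|^\ga)$, hence polynomial; and if $|N(c)| = O(\log \|\Sigma\|)$ then $\sum_{p=0}^\ga \binom{|N(c)|}{p} \leq 2^{|N(c)|} = \|\Sigma\|^{O(1)}$, again polynomial. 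The only remaining care is bookkeeping: verifying that the data structure used for the closure can be reset between successive subsets $S$ within the same per-$S$ budget, so that no hidden factor creeps into the bound.
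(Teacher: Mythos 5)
Your proposal is correct and takes essentially the same approach as the paper: apply Lemma~\ref{lemma-adfe}, enumerate the $\sum_{p=0}^\ga\binom{|N(c)|}{p}$ subsets $S$ (parametrized by their complements in $N(c)$), and check each in linear time. You additionally make explicit the one detail the paper leaves implicit --- that a single forward-chaining closure per $S$ decides $\Sigma \models \OR_{i \in S}\ol{x}_i \vee x_j$ for all $j \in P(c)$ simultaneously --- which is exactly what keeps an extra factor of $|P(c)|$ out of the stated bound.
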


\section{Deductive  Inference from Characteristic Sets}
\label{sec-char}
In this section, we consider the case when Horn knowledge bases can be
represented by characteristic sets. 
Different from formula-based representation, the deductions for interiors
and exteriors are both intractable, unless P=NP. 

\subsection{Interiors}
We first present an algorithm to solve the deduction problem for the
interiors of Horn theories. 
The algorithm requires exponential time in general, but it is
polynomial when $\ga$ is small. 

Let $\Sigma$ be a Horn theory given by its characteristic set
$\charset(\Sigma)$, and let $c$ be a clause. 
Then for a nonnegative integer $\ga$, we have 
\begin{equation}
\label{eq-model-1}
\sigma_{-\ga}(\Sigma) \models c \mbox{ if and only if }
\sigma_{-\ga}(\Sigma) \wedge \ol{c}\equiv 0. 
\end{equation}
Let $v^*$ be a unique minimal model such that $c(v^*)=0$ (i.e., $\ol{c}(v^*)=1$). 
By the definition of interiors, $v^*$ is a model of $\sigma_{-\ga}(\Sigma)$ if and only if
all $v$'s in $\cN_{\ga}(v^*)$ are models of $\Sigma$. 
Therefore, for each model $v$ in $\cN_{\ga}(v^*)$, we check if $v \in
\modd(\Sigma)$, which is equivalent to  
\begin{equation}\label{eq:extmodel}
\bigwedge_{w \in \charset(\Sigma) \atop w \ge v}\!\!w \ =  \ v. 
\end{equation}
If \raf{eq:extmodel} holds for all models $v$ in $\cN_{\ga}(v^*)$, 
then we can  immediately conclude by \raf{eq-model-1} that
$\sigma_{-\ga}(\Sigma) \not\models c$. 
On the other hand, if there exists a model $v$ in $\cN_{\ga}(v^*)$ such
that \raf{eq:extmodel} does not hold, 
let $J=ON(\bigwedge_{w\in
\charset(\Sigma) \atop w \ge v} w)
\setminus ON(v)$. 
By definition, we have $J\not=\emptyset$, and 
we can see that 
\begin{eqnarray}
\sigma_{-\ga}(\Sigma)& \models& \OR_{i \in ON(v)}\ol{x}_i
\vee x_j \ \   \mbox{ for all } j \in J.  
\label{eq-aaae}
\end{eqnarray}
If $J \cap N(c)\not=\emptyset$, then by Lemma \ref{prop3} and \raf{eq-aaae}, 
we have $\sigma_{-\ga}(\Sigma) \models \OR_{i \in ON(v) \cap N(c)}\ol{x}_i$, 
since $|ON(v) \setminus N(c)| \leq \ga-1$. This implies 
$\sigma_{-\ga}(\Sigma) \models c$. 
On the other hand, if $J \cap N(c)=\emptyset$, then by Lemma \ref{prop3} and \raf{eq-aaae}, 
we have $\sigma_{-\ga}(\Sigma) \models \OR_{i \in N(c)}\ol{x}_i
\vee x_j$ for all $j\in J$.   
Thus,  if $J$ contains an index in $P(c)$, then we can conclude that
$\sigma_{-\ga}(\Sigma) \models c$; Otherwise, we check the condition
$\sigma_{-\ga}(\Sigma) \models c \vee \OR_{j \in J}\ol{x}_j$, instead of
$\sigma_{-\ga}(\Sigma) \models c$. 
Since  a new clause $d=c \vee \OR_{j \in J}\ol{x}_j$ is longer than
$c$, after at most $n$ iterations, we can answer the deductive query. 
Formally, our algorithm can be described as Algorithm \ref{alg:DIC}. 

\begin{algorithm}
 \caption{{Deduction-Interior-from-Charset}}
\label{alg:DIC}
\begin{description}
\setlength{\itemsep}{0.1cm}
 \item[Input:] The characteristic set $\charset(\Sigma)$ of a Horn theory
	    $\Sigma$, a clause $c$ and a nonnegative integer $\ga$. 

 \item[Output:] Yes, if $\sigma_{-\ga}(\Sigma) \models c$; Otherwise,
		      No.
\smallskip

 \item[Step\,0.] Let $N:=N(c)$,  $d:=c$ and $q:=1$.   
 \item[Step\,1.] Let $v^*$ be the unique minimal
		      model such that $d(v^*)=0$.

 \item[Step\,2.] {\bf For} each $v$ in $\cN_{\ga}(v^*)$ {\bf do}\\[.1cm]  
\hspace*{1.4cm}
{\bf If} \raf{eq:extmodel} does not hold, \\[.1cm]  
\hspace*{2cm}{\bf then} let $v^{(q)}:=v$,  
$J:=ON(\bigwedge_{w\in
\charset(\Sigma) \atop w \ge v} w)
\setminus ON(v)$ and \\
\hspace*{2.4cm} $q:=q+1$  \\[.1cm]  
\hspace*{2cm}{\bf If} $J \cap (N \cup P(c))\not=\emptyset$, {\bf then} output yes and halt. \\[.1cm]  
\hspace*{2cm}Let $N:= N \cup J$ and $d:=\OR_{i \in N}\ol{x}_i \vee
		      \OR_{i \in P(c)} x_i$. \\[.1cm]  
\hspace*{2cm}Go to Step 1. \\[.1cm]  
\hspace*{1cm}{\bf end}$\{$for$\}$            
\item[Step\,3.] Output No and halt.  \qed
\end{description}
\end{algorithm}

\begin{thm}
\label{th-interior-charset}
Given  the characteristic model $\charset(\Sigma)$ of a Horn theory
 $\Sigma$, 
a clause $c$ and a nonnegative integer $\ga$, 
a deductive query $\sigma_{-\ga}(\Sigma) \models c$ can be answered in 
 $O( n^{\ga+2} |\charset(\Sigma)|)$ time. In particular, it is
 polynomially solvable, if $\ga=O(1)$. 
\end{thm}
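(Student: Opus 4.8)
The plan is to analyze the running time of Algorithm~\ref{alg:DIC}, which the preceding discussion has already shown to be correct. The key observation driving the complexity is that the algorithm proceeds in iterations (each iteration being one pass through Steps~1--2 for a fixed clause $d$), and that each iteration strictly enlarges the set $N$ of negative indices. First I would bound the number of iterations: every time the algorithm returns to Step~1, the set $J$ computed in Step~2 is nonempty (as noted in the discussion, $J \neq \emptyset$ whenever \raf{eq:extmodel} fails), and we set $N := N \cup J$, so $|N|$ strictly increases. Since $N \subseteq \{1,\ldots,n\}$, there can be at most $n$ iterations before the algorithm halts, either by outputting Yes in Step~2 or by exhausting the neighborhood and outputting No in Step~3.

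Next I would bound the cost of a single iteration. For a fixed clause $d$, the minimal model $v^*$ with $d(v^*)=0$ is determined immediately in Step~1. The loop in Step~2 ranges over all $v \in \cN_{\ga}(v^*)$, and by the count given in the Preliminaries, $|\cN_{\ga}(v^*)| = \sum_{i=0}^{\ga}{n \choose i} = O(n^{\ga})$. For each such $v$, the dominant work is checking condition \raf{eq:extmodel}, namely computing $\bigwedge_{w \in \charset(\Sigma),\, w \ge v} w$ and comparing it to $v$; this requires scanning the characteristic set and takes $O(n|\charset(\Sigma)|)$ time. Computing $J$ when \raf{eq:extmodel} fails costs no more. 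Hence one iteration costs $O(n^{\ga} \cdot n|\charset(\Sigma)|) = O(n^{\ga+1}|\charset(\Sigma)|)$.

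Multiplying the two bounds gives the claimed total of $O(n \cdot n^{\ga+1}|\charset(\Sigma)|) = O(n^{\ga+2}|\charset(\Sigma)|)$, and the specialization to $\ga = O(1)$ yielding a polynomial-time algorithm is then immediate. The step I expect to require the most care is the per-neighbor cost analysis: one must confirm that evaluating \raf{eq:extmodel} for a single $v$ can indeed be done within $O(n|\charset(\Sigma)|)$ time by a straightforward scan of $\charset(\Sigma)$ (accumulating the componentwise AND of those $w \ge v$), and that no hidden overhead arises from recomputing neighborhoods or from the bookkeeping of the $v^{(q)}$ values, which are recorded but do not affect the asymptotic cost.
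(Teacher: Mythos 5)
Your proposal is correct and follows essentially the same route as the paper's proof: correctness is delegated to the discussion preceding Algorithm~\ref{alg:DIC}, each pass through Steps~1--2 costs $O(n^{\ga+1}|\charset(\Sigma)|)$ because checking \raf{eq:extmodel} takes $O(n|\charset(\Sigma)|)$ time per neighbor, and the number of iterations is bounded by $n$ since $N$ strictly grows (the algorithm only loops when $J\neq\emptyset$ and $J\cap(N\cup P(c))=\emptyset$), giving the total $O(n^{\ga+2}|\charset(\Sigma)|)$.
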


\begin{proof}
Since we can see algorithm {\sc Deduction-Interior-from-Charset} correctly answers
a deductive query from the discussion before the description, we only
estimate  the running time of the algorithm.  

Steps 0, 1 and 3  require $O(n)$ time. 
Step 2 requires $O(n^{\ga+1}|\charset(\Sigma)|)$ time, since
 \raf{eq:extmodel} can be checked in 
$O(n|\charset(\Sigma)|)$ time. 
Since we have at most $n$ iterations between Steps 1 and 2, 
the algorithm requires   $O( n^{\ga+2} |\charset(\Sigma)|)$ time. 
\end{proof}

However, in general, the problem is intractable, which contrasts with
the formula-model representation.

\begin{thm}
\label{theo-interior-char}
Given  the characteristic set $\charset(\Sigma)$ of a Horn theory
 $\Sigma$ and a positive integer $\ga$, 
it is {\rm co}-{\rm NP}-complete to decide whether
 $\sigma_{-\ga}(\Sigma)$ is consistent, i.e., 
 $\modd(\sigma_{-\ga}(\Sigma))\not=\emptyset$. 
\end{thm}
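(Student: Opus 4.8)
The plan is to prove the two directions separately: membership in co-NP and co-NP-hardness. Throughout I use the reformulation that, by the definition of the interior, $\sigma_{-\ga}(\Sigma)$ is consistent if and only if $\modd(\Sigma)$ contains some Hamming ball $\cN_{\ga}(v)$ of radius $\ga$; equivalently, there is a model $v$ whose distance to the set of non-models $\{0,1\}^n \setminus \modd(\Sigma)$ exceeds $\ga$.

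For membership in co-NP, the starting observation is that, for a \emph{fixed} center $v$, the predicate ``$\cN_{\ga}(v) \subseteq \modd(\Sigma)$'' lies in co-NP: a short witness for its failure is a single neighbor $w$ with $\parallel\! w - v\!\parallel \le \ga$, and $w \notin \modd(\Sigma)$ can be verified in polynomial time by testing \raf{eq:extmodel}. The difficulty is that consistency quantifies existentially over the center $v$, giving a priori a $\Sigma^p_2$ form $\exists v\,\forall w$. The plan is therefore to collapse this existential to a polynomially bounded, polynomial-time computable family $C$ of candidate centers such that $\modd(\Sigma)$ contains some ball of radius $\ga$ iff it contains one centered at a member of $C$. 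Given such a family, consistency becomes a disjunction over the $|C|$ many co-NP predicates above, and since co-NP is closed under polynomial disjunction (the complement is a polynomial conjunction of NP predicates, certified by one escaping neighbor per candidate), the whole problem lands in co-NP. The natural source for $C$ is the $\AN$-closed structure of the interior together with $\charset(\Sigma)$: since $\modd(\sigma_{-\ga}(\Sigma))$ is itself Horn, if it is nonempty it has a least model, and one aims to show that a working center can be taken among the meets of few characteristic models. Establishing that polynomially many candidates suffice is the main obstacle for this direction; small examples show that a working center need be neither a characteristic model nor the global minimum model of $\Sigma$ (e.g.\ for $\modd(\Sigma)=\{(000),(010),(011),(110)\}$, where $\charset(\Sigma)=\{(000),(011),(110)\}$, the unique radius-$1$ ball center is the non-characteristic model $(010)=(011)\AN(110)$), so the argument must genuinely exploit closure under $\AN$.

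For co-NP-hardness I would reduce an NP-complete problem to the complement, i.e.\ to deciding \emph{inconsistency} (that $\modd(\Sigma)$ contains no ball of radius $\ga$). A convenient choice is \textsc{Independent Set}, following the style of Theorem~\ref{theo:int:f:g}. The subtlety is that the plain encoding $\Sigma_G = \{\bar{x}_i \vee \bar{x}_j : \{i,j\}\in E\}$ is too weak: its non-models are the ``edge'' subcubes, and turning variables off lets $\cN_{\ga}(0\cdots 0)$ fit whenever $\ga \le 1$ while no ball fits once $\ga \ge 2$, so consistency does not track the graph at all. Hence the construction must add gadget characteristic models (and auxiliary variables) that penalize turning vertices off, forcing any ball center to encode a vertex set whose size and independence are balanced against $\ga$; the characteristic set is produced directly, rather than as clauses, to match the model-based representation. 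Designing these gadget models so that $\modd(\Sigma)$ contains a radius-$\ga$ ball exactly when the graph lacks an independent set of the prescribed size is the constructive heart of this direction.

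In summary, the two pressure points are: for membership, proving that the existential over ball centers collapses to polynomially many effectively computable candidates (so that a single escaping neighbor per candidate certifies inconsistency); and for hardness, engineering a characteristic set whose induced family of forbidden subcubes has covering radius controlled by the combinatorial instance. I expect the membership collapse to be the harder and more delicate step, since it must hold uniformly in $\ga$ and cannot rely on any simple closed form for the center.
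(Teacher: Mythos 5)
Both halves of your proposal stop precisely at the steps that would constitute the proof, and you say so yourself; as it stands this is an accurate analysis of the difficulties, not a proof. For membership, everything rests on the unproven collapse of the existential over ball centers to a polynomial-size, \emph{deterministically computable} candidate family $C$, and your sketch gives no route to it (``meets of few characteristic models'' is unquantified; your own example only shows a center can be a meet of two). Moreover, that collapse demands more than co-NP membership needs: since the goal is to put \emph{inconsistency} in NP, the verifier may receive the search data as part of the certificate rather than compute $C$ itself. That is exactly how the paper proceeds: it runs Algorithm~\ref{alg:DIC} on $(\charset(\Sigma), c=\emptyset, \ga)$ and takes as witness of inconsistency the at most $n$ failing vectors $v^{(1)},\dots,v^{(k)}$ produced by the run --- each $v^{(q)}$ is a non-model of $\Sigma$ lying in $\cN_{\ga}(v^*)$ for the current candidate center $v^*$, it forces a set $J$ of further variables to be on in every model of the interior, and the run halts with ``inconsistent'' when $J$ meets the current $N$. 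Given these vectors, the run is replayed and checked in polynomial time (membership in the neighborhood, failure of \raf{eq:extmodel}, recomputation of $J$ and $N$) without ever searching a neighborhood. The centers are thus supplied adaptively by the certificate; no a priori computable family $C$ is needed, which is how the paper sidesteps what you flag as the main obstacle.

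For hardness, you correctly observe that the clausal encoding $\{\bar{x}_i \vee \bar{x}_j \mid \{i,j\}\in E\}$ cannot work and that the characteristic set must be constructed directly, but you leave the construction open, and the direction you gesture at (auxiliary variables and gadgets penalizing turning vertices off) is not the one that works. The paper's reduction needs no auxiliary variables: take $\charset(\Sigma_G)=\{ v^{(i,j)}, v^{(i,j,l)} \mid \{i,j\}\in E,\ l \in V\setminus\{i,j\}\}$ with $\OFF(v^{(i,j)})=\{i,j\}$ and $\OFF(v^{(i,j,l)})=\{i,j,l\}$, and set $\ga=n-k$. A meet of such generators has OFF-set equal to a union of edges and edge-plus-vertex triples, and conversely any $v$ whose OFF-set contains an edge $\{i,j\}$ equals $v^{(i,j)} \wedge \AN_{l \in \OFF(v)\setminus\{i,j\}} v^{(i,j,l)}$; hence $\modd(\Sigma_G)$ is exactly the set of vectors whose OFF-set contains an edge, i.e., the non-models are the vectors whose OFF-set is an independent set. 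This theory is negative, hence downward closed, so $\sigma_{-\ga}(\Sigma_G)$ is consistent iff $\cN_{\ga}(00\cdots0)\subseteq \modd(\Sigma_G)$, i.e., iff every vertex set of size at least $k$ contains an edge, i.e., iff $G$ has no independent set of size $k$. The trick your sketch misses is to encode independence in the OFF-sets of \emph{non-models} (a complementation of your picture), after which the only ball center that matters is the all-zero vector and no balancing gadget is required. In summary, neither direction of your proposal is carried out, so there is a genuine gap in each.
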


\begin{proof}
Let us first show the co-NP-completeness of the problem.
Apply Algorithm {\sc Deduction-Interior-from-Charset} to the instance 
($\charset(\Sigma)$, $c=\emptyset$, $\ga$). 
If $\sigma_{-\ga}(\Sigma)$ is not consistent, then the algorithm
 constructs vectors $v^{(1)}, \dots , v^{(k)}$, $k\le n$, in Step 2. 
We can see that these vectors form a witness to the inconsistency of
 $\sigma_{-\ga}(\Sigma)$.  In fact, if we are given these vectors, the
 inconsistency can be checked in polynomial time. This implies that the
 problem belongs to co-NP. 

We show the co-NP-hardness by reducing {\sc Independent
 Set} to our problem. Given a 
 problem instance 
 $G=(V=\{1,2,\ldots,n\},E)$ and $k$ of {\sc Independent
 Set}, let us define a Horn theory
 $\Sigma_{G}$ over $\At = \{x_1,x_2,$ $\ldots,x_n\}$ by
\[
 \charset(\Sigma_G) = \{ v^{(i,j)}, v^{(i,j,l)} \mid \{i,j\} \in E, l
 \in V \setminus \{i,j\} \},
\]
where $v^{(i,j)}$  and $v^{(i,j,l)}$ are respectively 
the vectors defined by $\OFF(v^{(i,j)})=\{i,j\}$ and $\OFF(v^{(i,j,l)})=\{i,j,l\}$.
Let $\ga = n-k$. 
Note that $\Sigma_G$ is a negative theory, and hence 
 $\sigma_{-\ga}(\Sigma_G)$ is consistent if and only if $(00\cdots0)$ is
 a model of  $\sigma_{-\ga}(\Sigma_G)$. 
Moreover, the latter condition is equivalent to the one that 
$G$ has no independent set of size at least $k\,(=n-\ga)$. 
 This completes the proof. 
\end{proof}

This result immediately implies the following corollary. 

\begin{cor}
\label{cor-interior-char}
Given  the characteristic set $\charset(\Sigma)$ of a Horn theory
 $\Sigma$, 
a clause $c$ and a positive integer $\ga$, 
it is {\rm NP}-complete to decide whether a deductive query 
 $\sigma_{-\ga}(\Sigma) \models c$ holds, even if $c=\emptyset$.  \qed
\end{cor}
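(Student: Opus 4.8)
The plan is to obtain this corollary directly from Theorem \ref{theo-interior-char} by identifying the deductive query for the empty clause with the complement of the consistency problem. The first thing I would pin down is the semantics of the query when $c=\emptyset$. The empty clause is an empty disjunction, hence the constant $0$ (false), so no model satisfies it. Consequently $\sigma_{-\ga}(\Sigma)\models\emptyset$ holds if and only if every model of $\sigma_{-\ga}(\Sigma)$ satisfies the (unsatisfiable) empty clause, which can happen only when $\sigma_{-\ga}(\Sigma)$ has no models at all. In other words, $\sigma_{-\ga}(\Sigma)\models\emptyset$ is equivalent to $\modd(\sigma_{-\ga}(\Sigma))=\emptyset$, i.e.\ to the \emph{inconsistency} of $\sigma_{-\ga}(\Sigma)$.

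With this identification, the deduction problem restricted to $c=\emptyset$ is precisely the complement of the consistency problem treated in Theorem \ref{theo-interior-char}. Since that theorem establishes that deciding $\modd(\sigma_{-\ga}(\Sigma))\neq\emptyset$ is co-NP-complete, its complement — deciding $\modd(\sigma_{-\ga}(\Sigma))=\emptyset$, which is exactly our query $\sigma_{-\ga}(\Sigma)\models\emptyset$ — is NP-complete. So the corollary follows at once, and in fact for the special case $c=\emptyset$ claimed in the statement.

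Spelling out the two directions: for NP-hardness I would simply reuse the reduction from {\sc Independent Set} given in the proof of Theorem \ref{theo-interior-char} verbatim, taking $c=\emptyset$, since deciding $\sigma_{-\ga}(\Sigma)\not\models\emptyset$ is exactly deciding consistency, which that reduction already shows to be co-NP-hard; hence our query is NP-hard. For membership in NP I would invoke the witness already exhibited there: when $\sigma_{-\ga}(\Sigma)$ is inconsistent, Algorithm \ref{alg:DIC} produces vectors $v^{(1)},\dots,v^{(k)}$ with $k\le n$, which form a polynomial-size certificate of inconsistency that can be verified in polynomial time. This is the same certificate used to place the consistency problem in co-NP in Theorem \ref{theo-interior-char}, now read as placing inconsistency in NP.

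There is essentially no substantive obstacle here, as the corollary is a complementation of the preceding theorem; the only point requiring care is the semantic reading of the empty clause (that $\sigma_{-\ga}(\Sigma)\models\emptyset$ means inconsistency rather than triviality) and keeping the direction of the complement straight, so that co-NP-completeness of consistency yields NP-completeness of the deductive query.
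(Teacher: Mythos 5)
Your proposal is correct and follows essentially the same route as the paper: the corollary is obtained by complementing Theorem \ref{theo-interior-char}, noting that $\sigma_{-\ga}(\Sigma)\models\emptyset$ is precisely inconsistency of $\sigma_{-\ga}(\Sigma)$, reusing the {\sc Independent Set} reduction for hardness and the vectors $v^{(1)},\dots,v^{(k)}$ produced by Algorithm \ref{alg:DIC} as the polynomial-size certificate for NP membership. Your careful reading of the empty clause (an empty disjunction is unsatisfiable, so entailment of it means inconsistency) is exactly the point the paper leaves implicit when it says the theorem ``immediately implies'' the corollary.
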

\nop{
\begin{proof}
Similarly to the proof of Theorem \ref{theo-interior-char}, we can see that 
the problem belongs to NP. The hardness follows from Theorem \ref{theo-interior-char}. 
\end{proof}
}
Note that, different from the other hardness results, 
the hardness is not sensitive to the size of $c$.

\subsection{Exteriors}
Let us consider the exteriors. 
Similarly to the formula-based representation, we have the following
negative result. 

\begin{thm} 
\label{theo-exterior-char}
Given  the characteristic set $\charset(\Sigma)$ of a Horn theory
 $\Sigma$, 
a clause $c$ and a positive integer $\ga$, 
it is {\rm co}-{\rm NP}-complete to decide if a deductive query 
 $\sigma_{\ga}(\Sigma) \models c$ holds. 
\end{thm}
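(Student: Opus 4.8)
The plan is to prove membership in co-NP and co-NP-hardness separately, mirroring the arguments for Theorems~\ref{theo:int:f:g} and~\ref{theo-interior-char}.

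For membership I would reuse the existential reformulation already isolated in the proof of Theorem~\ref{theo:int:f:g}: by definition $\sigma_{\ga}(\Sigma)\not\models c$ holds if and only if there is a model $w\models\Sigma$ whose cheapest correction into a non-model of $c$ has weight at most $\ga$, i.e.\ $|\ON(w)\cap P(c)|+|\OFF(w)\cap N(c)|\le\ga$. The only point that differs from the formula-based setting is how $w\models\Sigma$ is certified: here $\Sigma$ is presented by $\charset(\Sigma)$, and $w\models\Sigma$ is testable in polynomial time by checking $\bigwedge_{u\in\charset(\Sigma),\,u\ge w}u=w$, exactly as in \raf{eq:extmodel}. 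Thus a single model $w$ is a polynomial-size, polynomial-time-checkable witness of $\sigma_{\ga}(\Sigma)\not\models c$, which places the problem in co-NP.

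For hardness I would reduce an NP-complete problem (Independent Set, as used throughout the paper) to the complement. The delicate point, and the reason this is not a trivial corollary of Theorem~\ref{theo:int:f:g}, is that the formula-based reduction used $\Sigma_G=\{\bar x_i\vee\bar x_j\mid\{i,j\}\in E\}$, whose characteristic set is essentially the family of maximal independent sets of $G$ and hence can be exponential; that theory therefore cannot be handed to the model-based algorithm as input. The substantive work is to exhibit a Horn theory with a \emph{polynomial}-size characteristic set, together with a clause $c$ and an integer $\ga$, such that the criterion above, namely the existence of $w\models\Sigma$ with $|\ON(w)\cap P(c)|+|\OFF(w)\cap N(c)|\le\ga$, is equivalent to $G$ having an independent set of size at least $k$. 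The natural handle is that every model of $\Sigma$ is a meet $\bigwedge_{u\in T}u$ of characteristic models, so minimizing the cost over all models is an optimization over subsets $T\subseteq\charset(\Sigma)$ in which the $\OFF$-union grows while the $\ON$-intersection shrinks as $T$ grows; the generators are to be engineered so that this monotone trade-off reproduces precisely the independent-set/vertex-cover threshold. Note also that such a reduction must leave $\ga$, $|P(c)|$ and $|N(c)|$ all unbounded, in agreement with the tractable special cases (bounded $\ga$, bounded $|P(c)|$, or logarithmic $|N(c)|$), since bounding any one of the three collapses the underlying search to polynomial size.

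The main obstacle I anticipate is this characteristic-set construction itself: I must simultaneously guarantee that $|\charset(\Sigma)|$ is polynomial, that $\ccap(\charset(\Sigma))$ equals the intended set of models, and that the cost-minimizing model faithfully tracks the optimum of the combinatorial instance rather than some spurious cheaper meet. Verifying $\modd(\Sigma)=\ccap(\charset(\Sigma))$ for the chosen generators, and ruling out unintended intersections that would make a model cheap without corresponding to a valid certificate, is where the real care is required; once a correct gadget is in place, the equivalence with Independent Set and hence co-NP-completeness follow routinely.
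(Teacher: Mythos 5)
Your co-NP membership argument is correct and essentially equivalent in force to the paper's: you certify $\sigma_{\ga}(\Sigma)\not\models c$ by a model $w\models\Sigma$ with $|\ON(w)\cap P(c)|+|\OFF(w)\cap N(c)|\le\ga$, checkable against $\charset(\Sigma)$ via \raf{eq:extmodel}, whereas the paper certifies it by a subclause $d$ of $c$ with $|d|=|c|-\ga$ and $\Sigma\not\models d$ (via Lemmas \ref{prop3} and \ref{intex}); both witnesses are polynomial-size and polynomial-time verifiable. You also correctly diagnose why hardness does not follow from Theorem \ref{theo:int:f:g}: the characteristic set of $\Sigma_G=\{\bar x_i\vee\bar x_j\mid\{i,j\}\in E\}$ can be exponentially large, so the formula-based reduction cannot simply be re-presented in model-based form.

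That diagnosis, however, is where your hardness argument stops: you never exhibit the characteristic set, the clause $c$, or the integer $\ga$, and you explicitly defer the gadget (``once a correct gadget is in place\dots''). This is not a routine detail; it is the substance of the theorem. Moreover, the monotone trade-off you gesture at (the $\OFF$-union grows while the $\ON$-intersection shrinks as the meet ranges over larger sets of generators) does not by itself encode any graph structure: for instance, taking one generator $m^{(v)}$ per vertex with $\ON(m^{(v)})=V\setminus\{v\}$ and $c=\OR_{i\in V}\bar x_i$ makes the cost of the meet $\AN_{v\in U}m^{(v)}$ equal to $|U|$, independent of $E$, so the criterion degenerates. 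The paper's device is a reduction from {\sc Vertex Cover} with per-edge padding: variables $V\cup W$, where $W=\bigcup_{e\in E}W_e$ and each block $W_e$ has $|V|$ variables; characteristic models $m^{(v)}$, $v\in V$, with $\ON(m^{(v)})=(V\setminus\{v\})\cup\bigcup_{e:\,v\notin e}W_e$; and $c=\OR_{i\in V}\bar x_i\vee\OR_{i\in W}x_i$ with $\ga=k<|V|$. In a meet $m^{(U)}=\AN_{v\in U}m^{(v)}$ the block $W_e$ stays entirely ON exactly when $e$ is uncovered by $U$, and since these variables occur positively in $c$, a single uncovered edge already costs $|V|>\ga$; hence a model of $\Sigma$ of cost at most $\ga$ exists if and only if $G$ has a vertex cover of size at most $k$. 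Some such budget-blowing mechanism, together with the verification that every model of $\Sigma$ (i.e., every meet of generators) is accounted for and that no spurious cheap meet arises, is precisely what your proposal leaves open.
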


\begin{proof} 
{From} Lemmas \ref{prop3} and  \ref{intex}, 
$\sigma_\ga(\Sigma) \not\models c$ if and only if 
there exists a subclause $d$ of $c$ such that $|d|=|c|-\ga$ and 
$\Sigma \not\models d$. 
This $d$ is a witness that the problem belongs to co-NP.

We then show the hardness by a reduction from {\sc Vertex Cover} which
 is known to be NP-hard. 
{\sc Vertex Cover} is the problem to decide if a given graph $G=(V,E)$
 has a vertex cover $U$ such that $|U| \leq k$ for a given integer $k\,( < n)$.  
Here $U \subseteq V$ is called {\em vertex cover} if $U \cap e\not=
 \emptyset$ holds for all $e \in E$.  
For this problem instance,  we construct our problem instance. 
For each $e \in E$, let $W_e=\{e_1,e_2,\ldots, e_{|V|}\}$, and let 
$W=\bigcup_{e\in E} W_e$. 
Let $m^{(v)}$, $v \in V$, be a model over $V \cup W$ such that 
\[
 \ON(m^{(v)})=(V \setminus \{v\}) \cup \bigcup_{v \not\in
 e}W_e,  
\]
and let $\charset(\Sigma)$ be the characteristic set for some Horn theory
 $\Sigma$ defined by 
$ \charset(\Sigma)  =  \{ m^{(v)} \mid v \in V\}$. 
We define $c$ and $\ga$ by 
\[
 c  =  \OR_{i\in V} \ol{x}_i \vee  \OR_{i\in W} x_i \ \mbox{ and }\  
\ga=k, 
\]
respectively. For this instance, we show that $\sigma_\ga(\Sigma)
 \not\models c$  if and only if the corresponding $G$ has a vertex cover
 $U$ of size at most $k\,(=\ga)$. 

For the if part, let $U$ be such a vertex cover of $G$. 
For this $U$, we consider model $m^{(U)} \df \bigwedge_{v\in U}
 m^{(v)}$, 
which is a model of $\Sigma$ by the intersection property of a Horn theory.   
Note that $m^{(U)}$ does not satisfy a clause $d= 
\OR_{i\in V\setminus U} \ol{x}_i \vee \OR_{i\in W} x_i$. 
Since $d$ is a subclause of $c$ of length at least $|c|-\ga$, 
$m^{(U)}$ is not a model of $\sigma_{-\ga}(c)$  by Lemma \ref{prop3}. 
This completes the if part by Lemma \ref{intex}.

For the only-if part, let us assume that $\sigma_\ga(\Sigma) \not\models
 c$.
Then by Lemmas \ref{prop3} and  \ref{intex}, 
there exists a subclause $d$ of $c$ such that $|d|=|c|-\ga$ and 
$\Sigma \not\models d$. 
This implies that $\Sigma \wedge \ol{d}$ contains a model $m$. 
By $\ga< n$, for each $e \in E$, there exist an index $j$ in $W_e$ such that 
$m_j=0$.
Since any model $m'$ in $\Sigma$ satisfy either $m'_i=0$ or $m'_i=1$ for
 all $i \in W_e$, 
we have $m_i=0$ for all $i \in W$. This means that $V\setminus \ON(m)$
 is a vertex cover of $G$, and since $|V\setminus \ON(m)| \leq k$, 
we have the only-if part. 
\end{proof}

By using Lemma \ref{lemma-adfe}, we can see that 
the problem can be solved in polynomial time, if $\ga$ or $|N(c)|$ is small. 
Namely, for each 
subset $S$ of $N(c)$ such that $|S| \geq |N(c)|-\ga$, 
let $v^S$ denotes the model such that $\ON(v^S)=S$. 
Then $w^S=\bigwedge_{w\in \charset(\Sigma): \atop w\ge v^S}w$ 
is the unique minimal model of $\Sigma$ such that $\ON(w^S) \supseteq
S$, 
and hence it follows from Lemma \ref{lemma-adfe} that 
it is enough to  check if 
$|\ON(w^s) \cap P(c)| \geq \ga-|N(c)|+|S|+1$.  
Clearly, this can be done in 
in $O\left( \sum_{p=0}^\ga{{|N(c)|}\choose{p}}
 n|\charset(\Sigma)|\right)$ time.

Moreover, if $|P(c)|$ is small, then the problem also become tractable, 
which contrasts with Theorem \ref{theo:int:f:g}.

\begin{lem}
\label{lemma-adfe2}
Let $\Sigma$ be a Horn theory, let $c$ be a clause, 
and $\ga$ be  a nonnegative integer. Then
$\sigma_\ga(\Sigma) \models c$ holds if and only if  each $S
 \subseteq P(c)$ such that $|S| \geq |P(c)|-\ga$ satisfies 
 \begin{equation}\label{eq:adfe2}
  |\OFF(w) \cap N(c)| \geq \ga- |P(c)|+|S|+1
 \end{equation}
 for all models $w$ of $\Sigma$ such that 
$\OFF(w)\cap P(c)=S$. 
\end{lem}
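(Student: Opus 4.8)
The plan is to dualize Lemma~\ref{lemma-adfe}, replacing the prime-implicate test by a direct condition on the models of $\Sigma$. First I would invoke Lemma~\ref{intex} to rewrite the goal $\sigma_\ga(\Sigma)\models c$ as the equivalent condition $\Sigma\models\sigma_{-\ga}(c)$, and then expand the right-hand side using the second form in Lemma~\ref{prop3}, namely $\sigma_{-\ga}(c)=\AN_{{S''\subseteq c:}\atop{|S''|=|c|-\ga}}\bigl(\OR_{\ell\in S''}\ell\bigr)$. Consequently $\Sigma\models\sigma_{-\ga}(c)$ holds if and only if $\Sigma\models d$ for every subclause $d$ of $c$ with $|d|=|c|-\ga$; negating, $\sigma_\ga(\Sigma)\not\models c$ if and only if some model $w$ of $\Sigma$ falsifies some length-$(|c|-\ga)$ subclause of $c$ (this is exactly the characterization already used in the proof of Theorem~\ref{theo-exterior-char}).

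Next I would translate ``$w$ falsifies a subclause of $c$'' into coordinate language. A literal $x_i$ with $i\in P(c)$ is falsified by $w$ precisely when $i\in\OFF(w)$, and a literal $\ol{x}_i$ with $i\in N(c)$ is falsified precisely when $i\in\ON(w)$. Hence the subclauses of $c$ falsified by $w$ are exactly the subsets of the literal set whose positive part is indexed by $\OFF(w)\cap P(c)$ and whose negative part by $\ON(w)\cap N(c)$; the largest such falsified subclause has size $|\OFF(w)\cap P(c)|+|\ON(w)\cap N(c)|$. Therefore $w$ falsifies a subclause of length exactly $|c|-\ga$ if and only if $|\OFF(w)\cap P(c)|+|\ON(w)\cap N(c)|\ge|c|-\ga$, since any subset of the maximal falsified subclause having the prescribed size is again a falsified subclause of $c$. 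Combining with the previous paragraph, $\sigma_\ga(\Sigma)\not\models c$ if and only if there is a model $w$ of $\Sigma$ with $|\OFF(w)\cap P(c)|+|\ON(w)\cap N(c)|\ge|c|-\ga$.

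It then remains a matter of rewriting. Putting $S=\OFF(w)\cap P(c)$ and substituting $|\ON(w)\cap N(c)|=|N(c)|-|\OFF(w)\cap N(c)|$ together with $|c|=|P(c)|+|N(c)|$, the displayed inequality collapses to $|\OFF(w)\cap N(c)|\le\ga-|P(c)|+|S|$, which is exactly the negation of~\raf{eq:adfe2}. Thus $\sigma_\ga(\Sigma)\not\models c$ if and only if some $w\in\modd(\Sigma)$, with $S=\OFF(w)\cap P(c)$, violates~\raf{eq:adfe2}, and contraposing yields the claimed equivalence. The one point demanding care -- and the main obstacle -- is to confirm that restricting $S$ to $|S|\ge|P(c)|-\ga$ is harmless: I would observe that any $w$ with $|S|<|P(c)|-\ga$ makes the right-hand side $\ga-|P(c)|+|S|+1$ of~\raf{eq:adfe2} nonpositive, so~\raf{eq:adfe2} holds vacuously for such $w$, and hence adding the hypothesis $|S|\ge|P(c)|-\ga$ alters neither direction of the equivalence. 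A minor secondary technicality is the degenerate regime $\ga\ge|c|$, where $\sigma_{-\ga}(c)$ is identically false and the extraction of a falsified subclause of size exactly $|c|-\ga$ requires $|c|-\ga\ge0$; I would dispatch this boundary case separately.
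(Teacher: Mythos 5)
Your proof is correct. One thing you could not have known: the paper actually states Lemma~\ref{lemma-adfe2} \emph{without} proof, so there is no official argument to compare against --- but your route is precisely the one the paper's own machinery suggests. The opening reduction ($\sigma_\ga(\Sigma)\models c$ iff $\Sigma\models\sigma_{-\ga}(c)$ by Lemma~\ref{intex}, expanded via Lemma~\ref{prop3} into the criterion that $\Sigma\models d$ for every subclause $d$ of $c$ with $|d|=|c|-\ga$) appears verbatim at the start of the paper's proof of Theorem~\ref{theo-exterior-char}, and your coordinate translation --- the largest subclause of $c$ falsified by a model $w$ has size $|\OFF(w)\cap P(c)|+|\ON(w)\cap N(c)|$, so some size-$(|c|-\ga)$ subclause is falsified iff this quantity is at least $|c|-\ga$ --- combined with the substitution $|\ON(w)\cap N(c)|=|N(c)|-|\OFF(w)\cap N(c)|$ yields exactly the negation of \raf{eq:adfe2}. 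Your two side-checks are also sound: when $|S|<|P(c)|-\ga$ the right-hand side of \raf{eq:adfe2} is nonpositive, so restricting to $|S|\ge|P(c)|-\ga$ loses nothing; and in the degenerate regime $\ga>|c|$, where the conjunction form of Lemma~\ref{prop3} misbehaves (an empty conjunction would wrongly read as true), both sides of the claimed equivalence reduce to ``$\Sigma$ is inconsistent'' --- if $\Sigma$ has a model $w$, then with $S=\OFF(w)\cap P(c)$ the right-hand side of \raf{eq:adfe2} exceeds $|N(c)|$, while a falsifying model of $c$ lies within Hamming distance $|c|<\ga$ of $w$ --- so the boundary case you defer really is a one-line check. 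A final structural remark: the paper's proof of the companion Lemma~\ref{lemma-adfe} does use Horn-ness of $\Sigma$ (via the fact that all prime implicates of a Horn theory are Horn), whereas your argument never does; the equivalence you prove is valid for arbitrary theories, with Horn-ness entering only in the algorithmic exploitation (maximal models obtained as intersections of characteristic models) that follows the lemma in the paper.
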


\nop{
}

Note that \raf{eq:adfe2} is monotone in the sense that, 
  if 
a model $w$ satisfies \raf{eq:adfe2}, 
then all models $v$ with $v < w$ also satisfy
it. 
Thus it is sufficient to check if  \raf{eq:adfe2} holds for all {\em maximal} models $w$ of
$\Sigma$ such that $\OFF(w)\cap P(c)=S$. 
Since such maximal models $w$ can be obtained from $w^{(i)}$  $(i \in S)$
 with $ i \in \OFF(w^{(i)})\cap P(c) \subseteq S$ by their
  intersection $w=\AN_{i \in S} w^{(i)}$, 
we can answer the deduction problem in 
$O\left(n \sum_{p=|P(c)|-\ga}^{|P(c)|}
 {|P(c)| \choose p}|\charset(\Sigma)|^p\right)$ time.

\begin{thm}
\label{th-adfea}
Given the characteristic set $\charset(\Sigma)$ of a Horn theory,   a
 clause $c$, and a
nonnegative integer $\ga$,  
a deductive query $\sigma_{\ga}(\Sigma) \models c$ can
 be  answered in $O\Bigl( n \min \{\sum_{p=0}^\ga{{|N(c)|}\choose{p}}
 |\charset(\Sigma)|,  \sum_{p=|P(c)|-\ga}^{|P(c)|}
 {|P(c)| \choose p}|\charset(\Sigma)|^p  \}\Bigr)$
 time. 
In particular, it is polynomially solvable, if $\ga=O(1)$,
$|P(c)|=O(1)$, or 
 $|N(c)|=O(\log n \cdot |\charset(\Sigma)|)$. 
\end{thm}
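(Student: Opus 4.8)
The plan is to recognize that Theorem~\ref{th-adfea} simply packages the two procedures already developed in the discussion preceding it, and to run whichever is cheaper on the given instance. The first procedure comes from Lemma~\ref{lemma-adfe}: for every subset $S\subseteq N(c)$ with $|S|\ge |N(c)|-\ga$ I would form the model $v^S$ with $\ON(v^S)=S$, compute the unique minimal model $w^S=\AN_{w\in\charset(\Sigma):\,w\ge v^S}w$ of $\Sigma$ dominating $v^S$, and test whether $|\ON(w^S)\cap P(c)|\ge \ga-|N(c)|+|S|+1$. Indexing by $p=|N(c)|-|S|$, there are $\sum_{p=0}^{\ga}\binom{|N(c)|}{p}$ such subsets, and computing each $w^S$ together with its test costs $O(n|\charset(\Sigma)|)$; this yields the first term in the minimum. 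The second procedure comes from Lemma~\ref{lemma-adfe2}: for every subset $S\subseteq P(c)$ with $|S|\ge |P(c)|-\ga$ I would check \raf{eq:adfe2}, and by its monotonicity it suffices to do so only on the maximal models $w$ of $\Sigma$ with $\OFF(w)\cap P(c)=S$. Enumerating those maximal models as intersections $w=\AN_{i\in S}w^{(i)}$ over the at most $|\charset(\Sigma)|^{|S|}$ choices of characteristic models $w^{(i)}$ satisfying $i\in\OFF(w^{(i)})\cap P(c)\subseteq S$ gives the second term.

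Running both procedures in parallel and halting as soon as either one finishes answers the query within the minimum of the two running times, which is exactly the stated bound. Thus the running-time claim reduces entirely to the two derivations already in hand.

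The part requiring the most care is verifying the three polynomial special cases, each by a crude estimate of the relevant sum. For $\ga=O(1)$ I would keep the first term and use $\sum_{p=0}^{\ga}\binom{|N(c)|}{p}\le (\ga+1)\binom{|N(c)|}{\ga}=O(n^{\ga})$, which is polynomial since $|N(c)|\le n$. For $|N(c)|=O(\log n|\charset(\Sigma)|)$ I would again use the first term, bounding $\sum_{p=0}^{\ga}\binom{|N(c)|}{p}\le 2^{|N(c)|}=(n|\charset(\Sigma)|)^{O(1)}$. For $|P(c)|=O(1)$ I would switch to the second term and bound $\sum_{p=|P(c)|-\ga}^{|P(c)|}\binom{|P(c)|}{p}|\charset(\Sigma)|^{p}\le 2^{|P(c)|}|\charset(\Sigma)|^{|P(c)|}$, which is polynomial for constant $|P(c)|$.

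The genuine obstacle lies not in this packaging but in the count $|\charset(\Sigma)|^{|S|}$ underlying the second term: one must confirm that every maximal model $w$ of $\Sigma$ with $\OFF(w)\cap P(c)=S$ arises as an intersection of at most $|S|$ characteristic models, one witnessing each coordinate of $S$, and that ranging over all such intersections misses no maximal model relevant to \raf{eq:adfe2}. Since $\modd(\Sigma)$ is closed under $\AN$ and \raf{eq:adfe2} is monotone, this follows from the characteristic-set representation; nevertheless it is the step where the correctness of the second bound, and hence of the whole minimum, actually resides, so I would present it carefully before invoking the two running-time estimates.
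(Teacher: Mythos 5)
Your proposal is correct and takes essentially the same approach as the paper: Theorem~\ref{th-adfea} is indeed just the packaging of the two procedures developed immediately before it --- the enumeration over subsets $S\subseteq N(c)$ via Lemma~\ref{lemma-adfe} and the enumeration over subsets $S\subseteq P(c)$ via Lemma~\ref{lemma-adfe2} with the monotonicity of \raf{eq:adfe2} and the representation of maximal models as intersections $\AN_{i\in S}w^{(i)}$ of characteristic models --- running whichever is cheaper. Your explicit verification of the three polynomial special cases and of the intersection argument for maximal models fills in details the paper leaves implicit, but does not depart from its route.
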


\section{Deductive  Inference for Envelopes of the Exteriors of Horn Theories}
\label{sec-env}

We have considered the deduction for the interiors and exteriors  of
Horn theories. 
As mentioned before, the interiors of Horn 
theories are also Horn, while this does not hold for the exteriors. 
This means that the exteriors of Horn theories might lose beneficial properties of Horn theories. 
One of the ways to overcome such a hurdle is {\em Horn Approximation},
that is, approximating a theory by a Horn
theory~\cite{approximation}. There
are several methods for approximation, but one of the most natural ones is to approximate
a theory by its {\em Horn envelope}. 
For a theory $\Sigma$, its {\em Horn envelope} is the Horn theory $\Sigma_e$
such that $\modd(\Sigma_{e})=\ccap(\modd(\Sigma))$. 
Since Horn theories are closed under intersection, Horn envelope is the
least Horn upper bound for $\Sigma$, i.e., 
$\charset(\Sigma_e) \supseteq \charset(\Sigma)$ 
 and there exists no Horn
theory $\Sigma^*$ such that $\charset(\Sigma_e) \supsetneq \charset(\Sigma^*) \supseteq 
\charset(\Sigma)$. 
In this section, we consider the deduction for  Horn envelopes of
exteriors of Horn theories, i.e., $\sigma_{\ga}(\Sigma)_e \models c$.

\subsection{Model-Based Representations}
Let us first consider the case in which knowledge bases are represented
by characteristic sets. 

\begin{prop} \label{prop:1} 
Let $\Sigma$ be a Horn theory, and let $\ga$ be a nonnegative integer. 
Then we have 
\begin{eqnarray}
\label{eq-horn-env1}
\modd(\sigma_{\ga}(\Sigma)_e)&=&Cl_{\wedge}(\bigcup_{v\in \charset(\Sigma)}\cN_{\ga}(v)).
\end{eqnarray}
\end{prop}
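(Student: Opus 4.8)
The plan is to unfold both sides into sets of models and then reduce the claim to a statement about $\wedge$-generation. By the definition of the Horn envelope, $\modd(\sigma_{\ga}(\Sigma)_e)=\ccap(\modd(\sigma_{\ga}(\Sigma)))$. By the definition of the $\ga$-exterior together with the symmetry of the Hamming distance, a model $u$ lies in $\modd(\sigma_{\ga}(\Sigma))$ exactly when $u\in\cN_{\ga}(w)$ for some $w\in\modd(\Sigma)$, so $\modd(\sigma_{\ga}(\Sigma))=\bigcup_{w\in\modd(\Sigma)}\cN_{\ga}(w)$. Hence \raf{eq-horn-env1} is equivalent to the identity $\ccap\bigl(\bigcup_{w\in\modd(\Sigma)}\cN_{\ga}(w)\bigr)=\ccap\bigl(\bigcup_{v\in\charset(\Sigma)}\cN_{\ga}(v)\bigr)$, which I would establish by two inclusions.

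The inclusion $\supseteq$ should be immediate: since $\charset(\Sigma)\subseteq\modd(\Sigma)$, the inner union on the right is contained in the one on the left, and $\ccap$ is monotone. All the work is in the reverse inclusion.

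For $\subseteq$, I would first use that $\ccap$ is monotone and idempotent to reduce the goal to showing that each neighborhood $\cN_{\ga}(w)$ with $w\in\modd(\Sigma)$ is contained in the right-hand closure. So fix $w\in\modd(\Sigma)$ and $u\in\cN_{\ga}(w)$, and set $D=\{j: u_j\neq w_j\}$, so that $|D|=\parallel u-w\parallel\le\ga$. Because $\Sigma$ is Horn, its models are generated from the characteristic set under $\wedge$, i.e.\ $\modd(\Sigma)=\ccap(\charset(\Sigma))$, so I may write $w=\AN_i v^{(i)}$ with $v^{(i)}\in\charset(\Sigma)$. The key construction is, for each $i$, the model $u^{(i)}$ that agrees with $u$ on the coordinates in $D$ and with $v^{(i)}$ on all coordinates outside $D$. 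Then $u^{(i)}$ and $v^{(i)}$ can differ only on $D$, giving $\parallel u^{(i)}-v^{(i)}\parallel\le|D|\le\ga$, i.e.\ $u^{(i)}\in\cN_{\ga}(v^{(i)})$; and taking the meet, on a coordinate $j\in D$ every $u^{(i)}_j$ equals $u_j$, while on $j\notin D$ one has $\AN_i u^{(i)}_j=\AN_i v^{(i)}_j=w_j=u_j$, so $\AN_i u^{(i)}=u$. This would exhibit $u$ as a $\wedge$-combination of elements of $\bigcup_{v\in\charset(\Sigma)}\cN_{\ga}(v)$, placing $u$ in the right-hand closure.

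The one genuinely delicate point, and the step I expect to be the main obstacle, is pinning down this construction of the $u^{(i)}$: one must simultaneously keep each $u^{(i)}$ within Hamming distance $\ga$ of its generator $v^{(i)}$ and guarantee that their meet recovers $u$ exactly. Both hinge on the fact that $u$ and $w$ coincide off $D$ and that $w$ is precisely $\AN_i v^{(i)}$, so the bookkeeping has to be done coordinate-by-coordinate on $D$ versus its complement. Everything else in the argument, namely the reduction to the two unions, the trivial inclusion, and the appeal to $\modd(\Sigma)=\ccap(\charset(\Sigma))$, is routine.
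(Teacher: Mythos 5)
Your proof is correct and takes essentially the same route as the paper: the easy inclusion follows from monotonicity of $\ccap$ since $\charset(\Sigma)\subseteq\modd(\Sigma)$, and the hard inclusion combines $\modd(\sigma_{\ga}(\Sigma))=\bigcup_{w\in\modd(\Sigma)}\cN_{\ga}(w)$ with the fact that $\modd(\Sigma)=\ccap(\charset(\Sigma))$. If anything, your coordinate-wise construction of the models $u^{(i)}$ (agreeing with $u$ on $D$ and with $v^{(i)}$ off $D$) supplies an explicit justification for the one step the paper merely asserts, namely that $w\in\cN_{\ga}(u)$ with $u=\AN_{z\in S_w}z$, $S_w\subseteq\charset(\Sigma)$, forces $w\in Cl_{\wedge}\bigl(\bigcup_{v\in S_w}\cN_{\ga}(v)\bigr)$.
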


\begin{proof}
By definition,  $\modd(\sigma_{\ga}(\Sigma)_e)=\ccap(\modd(\sigma_{\ga}(\Sigma))) \supseteq
 Cl_{\wedge}(\bigcup_{v\in \charset(\Sigma)}\cN_{\ga}(v))$ holds. 
For the converse direction, let $v^*$ be a model of Horn envelope of the
 $\ga$-exterior, i.e.,  $v^* \in \modd(\sigma_{\ga}(\Sigma)_e)$. 
Then $v^*$ can be represented by $v^*=\AN_{w \in W}w$ for some $W \subseteq
 \modd(\sigma_{\ga}(\Sigma))$. 
Assume that $w \in W$ is contained in $\cN_\ga(u)$ for some model
 $u$ of $\Sigma$. Since such a $u$ can be represented by $u=\AN_{z \in
 S_w}z$ for some $S_w \subseteq \charset(\Sigma)$, 
$w$ belongs to $Cl_{\wedge}(\bigcup_{v\in S_w}\cN_{\ga}(v))$. 
This, together with  $v^*=\AN_{w \in W}w$, implies that $v^*$ also belongs
 to $Cl_{\wedge}(  \bigcup_{v\in \charset(\Sigma)}\cN_{\ga}(v))$. 
\end{proof}

For a clause $c$, let $v^*$ be the unique minimal model such that
$c(v^*)=0$. 
We recall that, for a Horn theory $\Phi$,  
\begin{equation}
\label{eq-yyy}
\Phi \models c  \ \mbox{ if and only if }\ 
c(\AN_{v \in \charset(\Phi)\atop v \geq v^*} v)=1.
\end{equation}
 Therefore, Proposition \ref{prop:1} immediately implies an algorithm for
the deduction for  $\sigma_{\ga}(\Sigma)_e$ from $\charset(\Sigma)$, 
since we have $\charset(\sigma_{\ga}(\Sigma)_e) \subseteq \bigcup_{v\in \charset(\Sigma)}\cN_{\ga}(v)$. 
However, for a general $\ga$, $\bigcup_{v\in \charset(\Sigma)}\cN_{\ga}(v)$ is exponentially
larger than  $\charset(\Sigma)$, and hence this direct method is not
efficient. 
The following lemma helps developing a polynomial time algorithm. 

\begin{lem}
\label{lemma-envelope-char1}
Let $\Sigma$ be a Horn theory, let $c$ be a clause, and let $\ga$ be a nonnegative integer. 
Then $\sigma_{\ga}(\Sigma)_e \models c$ holds if and only if the
 following two conditions are satisfied. 
\begin{description}
\item[{\rm (i)}] $|\OFF(v) \cap N(c)| \geq \ga$ holds for all $v \in
	   \charset(\Sigma)$. 
\item[{\rm (ii)}]  If $S=\{v \in
	   \charset(\Sigma) \mid |\OFF(v) \cap N(c)| =\ga\} \not=\emptyset$, 
$P(c)$ is not covered with $\OFF(v)$ for models $v$
	   in $S$, i.e.,  
$P(c) \not\subseteq \bigcup_{v \in \charset(\Sigma) \atop |\OFF(v) \cap N(c)|=\ga } \OFF(v)$. 
\end{description}\end{lem}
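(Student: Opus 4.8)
The plan is to collapse the entailment test to a single evaluation at one explicitly describable model of the envelope, and then to read that model directly off $\charset(\Sigma)$ via Proposition~\ref{prop:1}. Let $v^*$ be the unique minimal model with $c(v^*)=0$, so that $\ON(v^*)=N(c)$. Applying \raf{eq-yyy} with $\Phi=\sigma_{\ga}(\Sigma)_e$, I would argue that $\sigma_{\ga}(\Sigma)_e \models c$ holds exactly when the minimum model $m$ of $\sigma_{\ga}(\Sigma)_e$ with $m\ge v^*$ either fails to exist or satisfies $c$. Since $m\ge v^*$ forces $m_i=1$ for every $i\in N(c)$, all negative literals of $c$ are false at $m$, and the test $c(m)=1$ reduces to $\ON(m)\cap P(c)\ne\emptyset$. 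Thus everything hinges on describing $m$.

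By Proposition~\ref{prop:1}, $\modd(\sigma_{\ga}(\Sigma)_e)=\ccap(\bigcup_{v\in\charset(\Sigma)}\cN_{\ga}(v))$. The first auxiliary observation I would record is that a componentwise AND is $\ge v^*$ iff every factor is $\ge v^*$, because $\ON(\AN_j w_j)=\bigcap_j \ON(w_j)$ contains $N(c)$ only when each $\ON(w_j)$ does. Hence the models of the envelope lying above $v^*$ are precisely the closure of $A:=\{w\in\bigcup_{v\in\charset(\Sigma)}\cN_{\ga}(v):w\ge v^*\}$, and $m=\AN_{w\in A}w$ whenever $A\ne\emptyset$. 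Next I would analyze, for a fixed $v\in\charset(\Sigma)$, which neighbors $w\in\cN_{\ga}(v)$ satisfy $w\ge v^*$: reaching $\ON(w)\supseteq N(c)$ forces flipping the $|\OFF(v)\cap N(c)|$ coordinates of $N(c)\cap\OFF(v)$ from $0$ to $1$, so $\cN_{\ga}(v)$ contributes to $A$ iff $|\OFF(v)\cap N(c)|\le\ga$. When $|\OFF(v)\cap N(c)|=\ga$ the entire Hamming budget is consumed, so the single contributed model is the forced extension $w^{(v)}$ with $\OFF(w^{(v)})=\OFF(v)\setminus N(c)$; when $|\OFF(v)\cap N(c)|<\ga$ there is slack to additionally switch off any one further coordinate.

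With this structure in hand the proof becomes a clean case split. If condition (i) fails, some $v$ has $|\OFF(v)\cap N(c)|<\ga$; using the spare budget I would exhibit, for each $j\in P(c)$, a model in $A$ that is $0$ on coordinate $j$ (flip $j$ off if $v_j=1$, otherwise $w^{(v)}$ already is $0$ there since $j\notin N(c)$), so that $m_j=0$ for all $j\in P(c)$, giving $\ON(m)\cap P(c)=\emptyset$ and $\sigma_{\ga}(\Sigma)_e\not\models c$. If (i) holds, only the models $v\in S=\{v:|\OFF(v)\cap N(c)|=\ga\}$ contribute, so $A=\{w^{(v)}:v\in S\}$. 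If $S=\emptyset$ then $A=\emptyset$, no model lies above $v^*$, and the entailment holds vacuously (consistent with (ii) being vacuous). If $S\ne\emptyset$ then $m=\AN_{v\in S}w^{(v)}$, and since for $j\in P(c)$ we have $w^{(v)}_j=0$ exactly when $j\in\OFF(v)$, it follows that $\ON(m)\cap P(c)\ne\emptyset$ iff $P(c)\not\subseteq\bigcup_{v\in S}\OFF(v)$, which is precisely condition (ii). Combining the cases yields the stated equivalence.

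I expect the main obstacle to be the distance-exactly-$\ga$ analysis: rigorously establishing that each $v\in S$ contributes the single forced neighbor $w^{(v)}$ and that no alternative or cheaper neighbor above $v^*$ exists, so that $m$ equals $\AN_{v\in S}w^{(v)}$ and is genuinely a model of $\sigma_{\ga}(\Sigma)_e$ (via closure under $\AN$). The supporting lemma that an intersection lies above $v^*$ only when every term does is elementary but essential, since it is what licenses replacing the full closure by the finite, explicitly enumerated set $A$.
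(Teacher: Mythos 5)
Your proof is correct and follows essentially the same route as the paper's: both reduce the query via \raf{eq-yyy} to evaluating $c$ at the minimum envelope model above $v^*$, use Proposition~\ref{prop:1} to identify the envelope models above $v^*$ with (intersections of) the neighbors $w\ge v^*$ of characteristic models, observe that a model $v$ with $|\OFF(v)\cap N(c)|=\ga$ contributes only the forced extension with $\ON(w)=\ON(v)\cup N(c)$ (the paper's \raf{eq-oo}), and handle the failure of (i) by the same construction of models $w^{(j)}$, $j\in P(c)$, that switch off each positive coordinate within the spare budget. Your explicit set $A$ and the identity $m=\AN_{w\in A}w$ just package more systematically what the paper argues case by case.
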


\begin{proof}
To show the if part, let us first assume that (i) and (ii) in the lemma
 holds. 
Let $v$ be a model in $\charset(\Sigma)$ such that 
$|\OFF(v) \cap N(c)|>  \ga$.  
Then all models $w$ in $\cN_\ga(v)$ satisfy  
$\OFF(w) \cap N(c)\not=\emptyset$. 
Therefore, if all the models $v$ in $\charset(\Sigma)$ satisfy 
$|\OFF(v) \cap N(c)|>  \ga$, 
then by Proposition \ref{prop:1}, we have $\OFF(w) \cap
 N(c)\not=\emptyset$ for any model $w$ of $\sigma_{\ga}(\Sigma)_e$. 
This implies $\sigma_{\ga}(\Sigma)_e \models c$. 
Therefore, let us consider the case when $S=\{v \in
	   \charset(\Sigma) \mid |\OFF(v) \cap N(c)| =\ga\}$ is not empty.
Let $v^*$ be the unique minimal model such that $c(v^*)=0$. 
Then by Proposition \ref{prop:1}, we have 
\begin{eqnarray}
\{  v \in 
 \charset(\sigma_{\ga}(\Sigma)_e) \mid v \geq v^*\} &  & \nonumber\\ 
&&\hspace*{-3cm} \subseteq \{w
 \mid \ON(w)=\ON(v) \cup N(c) \mbox{ for some }v \in S\}. \label{eq-oo}
\end{eqnarray}
Since $P(c)$ is not covered with $\OFF(v)$ for models $v$
	   in $S$,
this, together with \raf{eq-yyy} implies $\sigma_{\ga}(\Sigma)_e \models
 c$.  

Let us next show the only-if part. 
Assume that (i) is satisfied, but (2) is not. 
Then  \raf{eq-yyy}  and \raf{eq-oo} 
imply  $\sigma_{\ga}(\Sigma)_e
 \not\models c$. 
On the other hand, if (1) is not satisfied, i.e., 
there exists a $v \in \charset(\Sigma)$ such that  $|\OFF(v) \cap N(c)| <
 \ga$, let $w^{(i)}$, $i \in P(c)$, be a model in $\cN_\ga(v)$ such  
that $\ON(w^{(i)}) \supseteq N(c)$ and $\OFF(w^{(i)}) \supseteq
 \{i\}$, and let $w^*=\bigwedge_{i\in P(c)}w^{(i)}$.   
Then we have $c(w^*)=0$ and $w^* \in \modd(\sigma_{\ga}(\Sigma)_e)$ by
 Proposition  \ref{prop:1}.  
This implies $\sigma_{\ga}(\Sigma)_e
 \not\models c$. 
\end{proof}

The lemma immediately implies the following theorem. 

\begin{thm}\label{theo:intenv:algo}
Given the characteristic set $\charset(\Sigma)$ of a Horn theory
 $\Sigma$, a clause $c$, and a nonnegative integer $\ga$, 
a deductive query $\sigma_{\ga}(\Sigma)_e \models c$ can be  
answered in  linear time.  
\end{thm}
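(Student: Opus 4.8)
The plan is to read off the algorithm directly from Lemma~\ref{lemma-envelope-char1}, so the main work has already been done in establishing that lemma; what remains is purely an implementation and running-time analysis. The deductive query $\sigma_{\ga}(\Sigma)_e \models c$ holds if and only if the two conditions (i) and (ii) of Lemma~\ref{lemma-envelope-char1} are both satisfied, and each of these is a simple combinatorial test over the characteristic models in $\charset(\Sigma)$. So the proof amounts to showing that both conditions can be checked within a single linear-time pass, where linear means $O(n|\charset(\Sigma)| + |c|)$.

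First I would describe how to check condition~(i). For each $v \in \charset(\Sigma)$ we must verify $|\OFF(v) \cap N(c)| \ge \ga$. Computing $|\OFF(v) \cap N(c)|$ for a single $v$ takes $O(n)$ time given the index set $N(c)$ (which we can mark in an array of size $n$ in $O(|c|)$ preprocessing), so over all characteristic models this is $O(n|\charset(\Sigma)|)$ total. If some $v$ violates the inequality, we output \textbf{No} and halt. Simultaneously, during the same scan, I would collect the set $S = \{ v \in \charset(\Sigma) \mid |\OFF(v)\cap N(c)| = \ga \}$; this requires no extra asymptotic cost since we already compute $|\OFF(v)\cap N(c)|$ for each $v$.

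Next I would check condition~(ii). If $S = \emptyset$, then condition~(ii) is vacuously satisfied and we output \textbf{Yes}. Otherwise we must decide whether $P(c) \subseteq \bigcup_{v \in S} \OFF(v)$; if this containment holds we output \textbf{No}, and if it fails (i.e.\ some index of $P(c)$ is covered by no $\OFF(v)$ with $v \in S$) we output \textbf{Yes}. To test the containment in linear time, maintain a Boolean array indexed by $P(c)$, initialized in $O(|c|)$ time; then for each $v \in S$ mark every index $j \in P(c) \cap \OFF(v)$ as covered, which costs $O(n)$ per model and hence $O(n|\charset(\Sigma)|)$ overall; finally scan $P(c)$ once to see whether any index remains uncovered. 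Summing the preprocessing, the two scans, and the final check gives the claimed $O(n|\charset(\Sigma)| + |c|)$ bound.

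Since there is no genuine mathematical obstacle remaining past Lemma~\ref{lemma-envelope-char1}, the only point requiring care is the bookkeeping that keeps everything to one linear-time sweep rather than, say, repeatedly rescanning $c$ or recomputing intersections. The cleanest presentation marks the index sets $N(c)$ and $P(c)$ in arrays once at the outset so that each membership query is $O(1)$; then both conditions reduce to a constant number of passes over $\charset(\Sigma)$ and over $c$, and correctness follows immediately from Lemma~\ref{lemma-envelope-char1}. I would therefore state that the two conditions are checkable in linear time and invoke the lemma to conclude.
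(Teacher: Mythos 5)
Your proposal is correct and takes the same approach as the paper: the paper proves Theorem~\ref{theo:intenv:algo} simply by remarking that Lemma~\ref{lemma-envelope-char1} immediately implies it, with the linear-time checks of conditions (i) and (ii) left implicit. You merely spell out the bookkeeping (marking $N(c)$ and $P(c)$ in arrays, one pass over $\charset(\Sigma)$ to test (i) and collect $S$, one coverage check for (ii)), which matches what the paper intends and correctly yields the $O(n|\charset(\Sigma)|+|c|)$ bound.
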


We remark that this contrasts with Corollary \ref{cor-interior-char}. 
Namely, if we are given the characteristic set $\charset(\Sigma)$ of a
Horn theory $\Sigma$, $\sigma_{\ga}(\Sigma)_e \models c$ is
polynomially solvable, while it is {\rm co}-{\rm NP}-complete to
decide if $\sigma_{\ga}(\Sigma) \models c$.

\subsection{Formula-Based Representation}
Recall that a {\em negative} theory (i.e., a theory consisting of 
clauses with no positive literal) is Horn and the exteriors of negative
theory are also negative, and hence Horn.
This means that, for a negative theory $\Sigma$, we have
$\sigma_{\ga}(\Sigma)_e=\sigma_{\ga}(\Sigma)$.  
Therefore, we can again make use of the reduction in the proof of  Theorem
\ref{theo:int:f:g}, since the reduction uses negative theories.  

\begin{thm}\label{theo:env:ext}
Given a Horn theory $\Sigma$, a clause $c$, and a nonnegative integer $\ga$, 
it is {\rm co}-{\rm NP}-complete to decide whether $\sigma_{\ga}(\Sigma)_e
 \models c$ holds, even if $P(c)=\emptyset$.  
\end{thm}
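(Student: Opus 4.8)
The plan is to establish co-NP-completeness of deciding $\sigma_{\ga}(\Sigma)_e \models c$ by combining two observations that are already available in the excerpt, so that essentially no new machinery is required. The key structural fact is that the hardness reduction used in the proof of Theorem~\ref{theo:int:f:g} builds a Horn theory $\Sigma_G$ consisting entirely of negative clauses $\bar{x}_i \vee \bar{x}_j$, and the query clause $c = \bigvee_{i=1}^n \bar{x}_i$ also has $P(c)=\emptyset$. First I would record the simple but crucial lemma that for a \emph{negative} theory $\Sigma$ one has $\sigma_{\ga}(\Sigma)_e = \sigma_{\ga}(\Sigma)$. The reason is that a negative clause is satisfied by every model below a given model (negative theories are closed downward under the componentwise order), so $\modd(\sigma_{\ga}(\Sigma))$ is itself closed under $\AN$; by the semantic characterization of Horn representability, $\modd(\sigma_{\ga}(\Sigma)) = \ccap(\modd(\sigma_{\ga}(\Sigma))) = \modd(\sigma_{\ga}(\Sigma)_e)$, whence the Horn envelope adds nothing.

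Granting that identity, the only-if and hardness directions become immediate transfers from Theorem~\ref{theo:int:f:g}. Since the instance produced in that proof uses a negative $\Sigma_G$, we have $\sigma_{\ga}(\Sigma_G)_e = \sigma_{\ga}(\Sigma_G)$, and therefore $\sigma_{\ga}(\Sigma_G)_e \models c$ if and only if $\sigma_{\ga}(\Sigma_G) \models c$. The latter was shown to encode the complement of {\sc Independent Set}, so the same reduction witnesses co-NP-hardness of the envelope deduction problem, and the $P(c)=\emptyset$ restriction is preserved because it already held in that reduction. Thus the hardness half is obtained with no additional work beyond invoking the negative-theory identity.

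For membership in co-NP I would exhibit a short certificate for a \textbf{no}-instance, i.e.\ for $\sigma_{\ga}(\Sigma)_e \not\models c$. The natural witness is a model $w \in \modd(\sigma_{\ga}(\Sigma)_e)$ with $c(w)=0$. By Proposition~\ref{prop:1}, $\modd(\sigma_{\ga}(\Sigma)_e) = \ccap(\bigcup_{v \in \charset(\Sigma)} \cN_{\ga}(v))$, so any such $w$ is a componentwise AND of polynomially many vectors $w^{(1)},\dots,w^{(m)}$ with each $w^{(\ell)}$ within Hamming distance $\ga$ of some model of $\Sigma$. Because $c(w)=0$ forces $w$ below the unique minimal falsifying model of $c$, and intersections only decrease coordinates, it suffices to guess for each relevant index one contributing vector; I would argue that $m$ can be taken at most $|P(c)|+1$ (or at most $n$), so a polynomial-size family suffices. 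The verifier then checks, in polynomial time, that each guessed $w^{(\ell)}$ lies in some $\cN_{\ga}(v)$ for a model $v$ of $\Sigma$ (using \raf{eq:extmodel} or the linear-time membership test from Kautz et al.) and that their AND falsifies $c$. This certificate places the problem in co-NP.

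The main obstacle is the membership argument, specifically bounding the number of vectors needed in the $\ccap$ representation of a falsifying model and verifying cheaply that each belongs to $\bigcup_v \cN_{\ga}(v)$. I expect this to reduce cleanly to the conditions already distilled in Lemma~\ref{lemma-envelope-char1}: a no-instance is exactly one where either some $v \in \charset(\Sigma)$ has $|\OFF(v)\cap N(c)|<\ga$, or the set $S$ of tight models fails the covering condition on $P(c)$. Either failure directly yields a specific, polynomially-checkable falsifying model (as constructed in that lemma's only-if proof via $w^* = \bigwedge_{i\in P(c)} w^{(i)}$), so the certificate is in fact canonical rather than merely guessed, and membership in co-NP follows without delicate counting.
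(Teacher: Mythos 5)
Your route is the same as the paper's on both halves: for hardness you observe that the exterior of a negative theory is its own Horn envelope and recycle the {\sc Independent Set} reduction of Theorem~\ref{theo:int:f:g} (whose instance is negative and has $P(c)=\emptyset$), which is exactly the paper's argument; for co-NP membership you certify a no-instance by a falsifying model of $\sigma_{\ga}(\Sigma)_e$ written as an intersection of polynomially many models of $\sigma_{\ga}(\Sigma)$, which is also the paper's certificate (the paper bounds the representation via its claim $|S_w|\leq n$, while your bound of at most $|P(c)|+1$, or $n$, intersectands is a correct and slightly cleaner count). Your downward-closure justification of $\sigma_{\ga}(\Sigma)_e=\sigma_{\ga}(\Sigma)$ for negative $\Sigma$ is sound.

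There is, however, one step that would fail as written: the verification of the certificate. You have the verifier check that each guessed $w^{(\ell)}$ ``lies in some $\cN_{\ga}(v)$ for a model $v$ of $\Sigma$, using \raf{eq:extmodel} or the linear-time membership test from Kautz et al.'' Both of those tools presuppose that $\charset(\Sigma)$ is available; in this theorem the input is the CNF $\Sigma$ itself, and $\charset(\Sigma)$ can be exponentially large in $\parallel\!\Sigma\!\parallel$, so neither test can be run by a polynomial-time verifier. Worse, without additional witnesses this check is itself intractable: deciding whether a given vector belongs to $\bigcup_{v \models \Sigma}\cN_{\ga}(v)$, i.e., whether it is a model of $\sigma_{\ga}(\Sigma)$, is NP-complete --- that is precisely what the reduction of Theorem~\ref{theo:int:f:g} establishes for the vector $(11\cdots 1)$. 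The repair is immediate, and is what the paper's representation implicitly encodes: the certificate must contain, together with each $w^{(\ell)}$, a witnessing model $v^{(\ell)}$ of $\Sigma$ with $\parallel\! w^{(\ell)}-v^{(\ell)}\!\parallel \,\leq \ga$; the verifier then merely evaluates the CNF on each $v^{(\ell)}$, checks the Hamming distances, and checks that $\AN_{\ell} w^{(\ell)}$ falsifies $c$. The same conflation of representations recurs in your closing paragraph: Lemma~\ref{lemma-envelope-char1} is stated for $\charset(\Sigma)$, and to use it here you need its variant with $\charset(\Sigma)$ replaced by $\modd(\Sigma)$ (which the paper notes is valid, after Algorithm~\ref{alg:DEEHT}), with the covering witnesses again being explicit models of $\Sigma$ rather than characteristic models.
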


\begin{proof}
Since the hardness is proved similarly to  Theorem
\ref{theo:int:f:g}, we show that the problem belongs to co-NP. 

Let $v$ be a model of $\sigma_{\ga}(\Sigma)_e$. 
Then $v$ can be represented by $v=\AN_{w \in W}w$ for some $W \subseteq
 \charset(\sigma_{\ga}(\Sigma))$. 
Since $\charset(\sigma_{\ga}(\Sigma)) \subseteq \bigcup_{w\in
 \charset(\Sigma)}\cN_{\ga}(w)$ by Proposition \ref{prop:1}, we have
\begin{eqnarray}
v&=&\AN_{w \in \charset(\Sigma)}\Bigl(\AN_{u \in  S_w}u\Bigr)
\label{eq-last1}
\end{eqnarray}
for some $S_w \subseteq \cN_{\ga}(w)$. We claim that there exists such a
 representation that $|S_w|\leq n$
 holds for
 all $w$'s in \raf{eq-last1}. 
 Let $w^*=\AN_{u \in  S_w}u$, and let $I=\ON(w^*) \cap \OFF(w)$  and 
$J=\OFF(w^*) \cap \ON(w)$. Then we have $w^*=\AN_{j \in J}\Bigl(
 w-e^{(j)}+\sum_{i \in I}e^{(i)} \Bigr) $, 
where $e^{(i)}$ denotes the $i$th unit model.   Since 
 $w-e^{(j)}+\sum_{i \in I}e^{(i)} \in \cN_{\ga}(w)$ for all $j \in J$, 
the claim is proved. 

Note that $\sigma_{\ga}(\Sigma)_e
 \not\models c$ if and only if there exists a model $v$ of $\sigma_{\ga}(\Sigma)_e$
such that $c(v)=0$. 
Since any model $v$ of $\sigma_{\ga}(\Sigma)_e$ can be represented by 
$v=\AN_{w \in \charset(\Sigma)}\Bigl(\AN_{u \in  S_w}u\Bigr)$ 
for some $S_w \subseteq \cN_{\ga}(w)$ with $|S_w| \leq n$ by our claim, 
the problem is in co-NP. 
\end{proof}

However, if $\ga$ or $N(c)$ is small,  the problem becomes tractable by
algorithm {\sc Deduction-Envelope-Exterior-from-Horn-Theory} (Algorithm \ref{alg:DEEHT}). 
\begin{algorithm}[hbtp]
 \caption{Deduction-Envelope-Exterior-from-Horn-Theory}
 \label{alg:DEEHT}
 \begin{description}
\setlength{\itemsep}{0.1cm}
 \item[Input:] A Horn theory
	    $\Sigma$, a clause $c$ and a nonnegative integer $\ga$. 

 \item[Output:] Yes, if $\sigma_{\ga}(\Sigma)_e \models c$; Otherwise,
		      No.
\smallskip

 \item[Step\,1.] /*  Check if there exists a model $v$ of $\Sigma$ such that   $|\OFF(v) \cap N(c)| < \ga$.
*/ \\[.1cm]
{\bf For} each $N \subseteq N(c)$ with $|N|=|N(c)|-\ga+1$ {\bf do}\\[.1cm]  
\hspace*{.7cm}
\parbox[t]{10.8cm}{Check if the theory obtained from $\Sigma$ by assigning $x_i=1$ for $i
		      \in N$ is satisfiable.} \\[.2cm]
\hspace*{.7cm}
{\bf If} so, {\bf then} output No and halt.  \\[.1cm]  
{\bf end}$\{$for$\}$

 \item[Step\,2.] /* Check if there exists a set $S=\{v \in
	   \modd(\Sigma) \mid |\OFF(v) \cap N(c)| =\ga\}$ such that 
$\bigcup_{v \in S} \OFF(v) \supseteq P(c)$.  */ \\[.1cm]
Let $J:=\emptyset$. \\[.1cm]
{\bf For} each  $N \subseteq N(c)$ with $|N|=|N(c)|-\ga$ {\bf do}\\[.1cm]  
\hspace*{0.7cm}
\parbox[t]{10.8cm}{Compute a unique minimal satisfiable model $v$ for 
the theory obtained from $\Sigma$ by assigning $x_i=1$ for $i
		      \in N$ is satisfiable. }\\[.2cm]
\hspace*{0.7cm}
Update $J := J \cup \{j \in P(c) \mid v_j =0\}$.  \\[.1cm]  
{\bf end}$\{$for$\}$\\[.1cm]
{\bf If} $J=P(c)$, {\bf then} output NO and halt. 

\item[Step\,3.] Output Yes and halt.  \qed
 \end{description}
\end{algorithm}

The algorithm is based on a necessary and sufficient condition
for $\sigma_{\ga}(\Sigma)_e \models c$, which is obtained from 
Lemma \ref{lemma-envelope-char1} by replacing all  $\charset(\Sigma)$'s
with $\modd(\Sigma)$'s. 
It is not difficult to see that such a condition holds from the proof of 
Lemma \ref{lemma-envelope-char1}.

\begin{thm}
\label{theo-env-interiro-CNF2}
Given a Horn theory
 $\Sigma$, a clause $c$, and a nonnegative integer $\ga$, 
a deductive query $\sigma_{\ga}(\Sigma)_e \models c$ can be  
answered in    $O\Bigl( \bigl({|N(c)| \choose \ga-1} +{|N(c)| \choose \ga}\bigr)
 \parallel \!\Sigma \!\parallel +|P(c)|\Bigr)$ time.
In particular, it is polynomially solvable,  if $\ga=O(1)$ or
 $|N(c)|=O(\log \parallel\!\Sigma \!\parallel)$.  
\end{thm}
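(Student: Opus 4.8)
The plan is to prove correctness and running time separately, leaning on the model-based reformulation of Lemma~\ref{lemma-envelope-char1} announced in the text: $\sigma_{\ga}(\Sigma)_e \models c$ holds if and only if (i) $|\OFF(v)\cap N(c)|\ge\ga$ for every $v\in\modd(\Sigma)$, and (ii) whenever $S=\{v\in\modd(\Sigma)\mid |\OFF(v)\cap N(c)|=\ga\}\neq\emptyset$, one has $P(c)\not\subseteq\bigcup_{v\in S}\OFF(v)$. First I would justify this $\modd$-version from the $\charset$-version: writing any $v\in\modd(\Sigma)$ as $v=\AN_{w\in S_v}w$ with $S_v\subseteq\charset(\Sigma)$ gives $\OFF(v)=\bigcup_{w\in S_v}\OFF(w)$, so $|\OFF(v)\cap N(c)|\ge|\OFF(w)\cap N(c)|$ for each $w\in S_v$; hence the minimum of $|\OFF(\cdot)\cap N(c)|$ over $\modd(\Sigma)$ is attained on $\charset(\Sigma)$ (giving the equivalence of~(i)), and any $v\in\modd(\Sigma)$ meeting $N(c)$ in exactly $\ga$ off-bits forces every $w\in S_v$ to have $\OFF(w)\cap N(c)=\OFF(v)\cap N(c)$, so $\bigcup_{v\in S}\OFF(v)\cap P(c)$ is unchanged when $S$ is taken over $\modd$ instead of $\charset$ (giving the equivalence of~(ii)). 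The remaining task is then to check that Algorithm~\ref{alg:DEEHT} decides~(i) and~(ii), and to bound its cost.

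For Step~1 I would rewrite $|\OFF(v)\cap N(c)|<\ga$ as $|\ON(v)\cap N(c)|\ge|N(c)|-\ga+1$. Thus~(i) fails exactly when some model of $\Sigma$ sets at least $|N(c)|-\ga+1$ of the literals of $N(c)$ to $1$, equivalently when, for some $N\subseteq N(c)$ with $|N|=|N(c)|-\ga+1$, the Horn theory $\Sigma\wedge\AN_{i\in N}x_i$ is satisfiable. Each such satisfiability test is linear-time Horn satisfiability, and there are ${|N(c)| \choose \ga-1}$ subsets $N$, so Step~1 is correct and costs $O\!\left({|N(c)| \choose \ga-1}\parallel\!\Sigma\!\parallel\right)$.

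For Step~2 (reached only when~(i) holds) I would exploit that each $v\in S$ satisfies $\OFF(v)\cap N(c)=N(c)\setminus N$ for $N:=N(c)\setminus(\OFF(v)\cap N(c))$, $|N|=|N(c)|-\ga$, so $v$ is a model of $\Sigma\wedge\AN_{i\in N}x_i$. Among all models with $\ON\supseteq N$, the least model $v^{N}$ of that restricted Horn theory maximizes the off-set, and under~(i) any such least model itself lies in $S$ (its off-bits in $N(c)$ cannot number fewer than $\ga$, and cannot exceed $|N(c)\setminus N|=\ga$); if the restricted theory is unsatisfiable it contributes nothing (in particular, when $S=\emptyset$ every restricted theory is unsatisfiable under~(i), so $J$ stays empty). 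Hence $\bigcup_{v\in S}\OFF(v)\cap P(c)=\bigcup_{N}\{j\in P(c)\mid v^{N}_j=0\}=J$, so $J=P(c)$ iff~(ii) fails, which is exactly the test in Step~2. This step performs ${|N(c)| \choose \ga}$ least-model computations, each $O(\parallel\!\Sigma\!\parallel)$.

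The running-time bound then follows by summation, and the corollaries for $\ga=O(1)$ and $|N(c)|=O(\log\parallel\!\Sigma\!\parallel)$ hold because the two binomials become polynomial in $\parallel\!\Sigma\!\parallel$. The one point requiring care --- and the part I expect to be the main obstacle --- is obtaining the \emph{additive} $O(|P(c)|)$ term rather than a factor $|P(c)|$ per iteration. I would maintain $J$ as a bitmap with a counter: positive-literal indices of $c$ that do not occur in $\Sigma$ are off in every least model, so they can be inserted into $J$ once in $O(|P(c)|)$ preprocessing, while indices of $P(c)$ that do occur in $\Sigma$ are read off during the least-model computation within the $O(\parallel\!\Sigma\!\parallel)$ budget; the final comparison $J=P(c)$ is $O(|P(c)|)$. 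Summing Steps~1 and~2 with this bookkeeping yields $O\!\left(\bigl({|N(c)| \choose \ga-1}+{|N(c)| \choose \ga}\bigr)\parallel\!\Sigma\!\parallel+|P(c)|\right)$, as claimed.
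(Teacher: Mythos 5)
Your proposal follows the paper's own proof essentially verbatim in approach: the paper likewise establishes Theorem~\ref{theo-env-interiro-CNF2} by arguing that Algorithm~\ref{alg:DEEHT} implements the version of Lemma~\ref{lemma-envelope-char1} with $\charset(\Sigma)$ replaced by $\modd(\Sigma)$, and by charging each for-loop iteration one linear-time Horn satisfiability / least-model computation. Your write-up is in fact more detailed than the paper's proof (which only asserts that correctness ``follows from the discussion'' and counts iterations), since you supply the $\charset$-to-$\modd$ transfer, the argument that under condition~(i) each least model $v^{N}$ lies in $S$, and the bookkeeping for the additive $O(|P(c)|)$ term; the one corner case you gloss over --- $P(c)=\emptyset$ with $S=\emptyset$, where the test $J=P(c)$ fires even though (ii) holds vacuously --- is a defect of the paper's algorithm as literally stated, not something introduced by your reconstruction.
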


\begin{proof}
The correctness of the algorithm follows from the discussion after its
 description. 
For the time complexity, it is known \cite{linear} that the satisfiability problem,
 together with computing a unique minimal model for a Horn theory, is
 possible in linear time. Since the number of the iterations of
 for-loops in Steps 2 and 3 are bounded by ${|N(c)| \choose \ga-1}$ and ${|N(c)|
 \choose \ga}$, respectively, the algorithm requires  $O\Bigl( \bigl({|N(c)| \choose \ga-1} +{|N(c)| \choose \ga}\bigr)
 \parallel\!\Sigma \!\parallel +|P(c)|\Bigr)$ time.
\end{proof}


\end{document}